\documentclass{article}

\usepackage{microtype}
\usepackage{graphicx}
\usepackage{subcaption}
\usepackage{multirow}
\usepackage{array}
\usepackage[table]{xcolor}

\usepackage{algorithm}

\usepackage{booktabs} 

\usepackage{hyperref}

\usepackage[accepted]{icml2026}

\usepackage{amsmath}
\usepackage{amssymb}
\usepackage{mathtools}
\usepackage{amsthm}

\usepackage[capitalize,noabbrev]{cleveref}
    
\theoremstyle{plain}
\newtheorem{theorem}{Theorem}[section]
\newtheorem{proposition}[theorem]{Proposition}

\theoremstyle{definition}
\newtheorem{definition}[theorem]{Definition}

\theoremstyle{remark}

\usepackage[textsize=tiny]{todonotes}

\icmltitlerunning{Mitigating Manifold Departure: Uncertainty-Aware Subspace Rectification for Trustworthy MLLM Decoding}
\newcommand{\methodName}{MGAP}
\begin{document}

\twocolumn[
\icmltitle{Mitigating Manifold Departure: Uncertainty-Aware Subspace Rectification \\ for Trustworthy MLLM Decoding}

\icmlsetsymbol{equal}{*}

\begin{icmlauthorlist}
\icmlauthor{Yingxuan Zhuang}{zju}
\icmlauthor{Jingxiao Yang}{zju}
\icmlauthor{Miao Pan}{zju}
\icmlauthor{Cheng Tan}{lab}
\icmlauthor{Yuxiang Cai}{zju}
\icmlauthor{Siwei Tan}{zju}
\icmlauthor{Chen Zhi}{zju}
\icmlauthor{Xuhong Zhang}{zju}
\icmlauthor{Jianwei Yin}{zju}
\icmlauthor{Jintao Chen}{zju,extra}

\end{icmlauthorlist}

\icmlaffiliation{zju}{Ningbo Global Innovation Center, Zhejiang University}
\icmlaffiliation{lab}{Shanghai Artificial Intelligence Laboratory}
\icmlaffiliation{extra}{Zhejiang Key Laboratory of Digital-Intelligence Service Technology}

\icmlcorrespondingauthor{Jintao Chen}{chenjintao@zju.edu.cn}

\icmlkeywords{Multi-model Large Language Model, Machine Learning, ICML, Hallucination, MLLM, Manifold Learning}

\vskip 0.3in
]

\printAffiliationsAndNotice{} 

\begin{abstract}

MLLMs frequently hallucinate objects inconsistent with visual inputs. This issue is typically attributed to the over-reliance on language priors, which can override the visual context. Recent training-free decoding strategies  address this by penalizing language priors. However, these methods overlook the dual nature of language priors, where they can be both helpful and harmful depending on the alignment with visual evidence. In particular, blindly suppressing language priors often disrupts the model’s semantic manifold, leading to performance degradation, a phenomenon we term Manifold Departure. To address this, we propose Manifold-Guided Adaptive Projection (MGAP), a geometry-aware, training-free decoding method that mitigates hallucinations while preserving representation structure. MGAP first constructs a language-prior subspace from blind hidden states via SVD. During decoding, MGAP projects each multimodal hidden state onto this subspace and applies a consistency-aware gate to adaptively attenuate only the projected prior component, yielding a subspace-selective update that largely preserves the orthogonal semantic components. Extensive experiments on POPE and CHAIR show that MGAP outperforms prior decoding baselines, achieving stronger hallucination suppression without sacrificing coherence.

\end{abstract}

\section{Introduction}
\label{sec:intro}

\begin{figure}[htbp]
    \centering
    \includegraphics[width=0.48\textwidth, trim=1.7cm 6.5cm 2cm 1.5cm, clip]{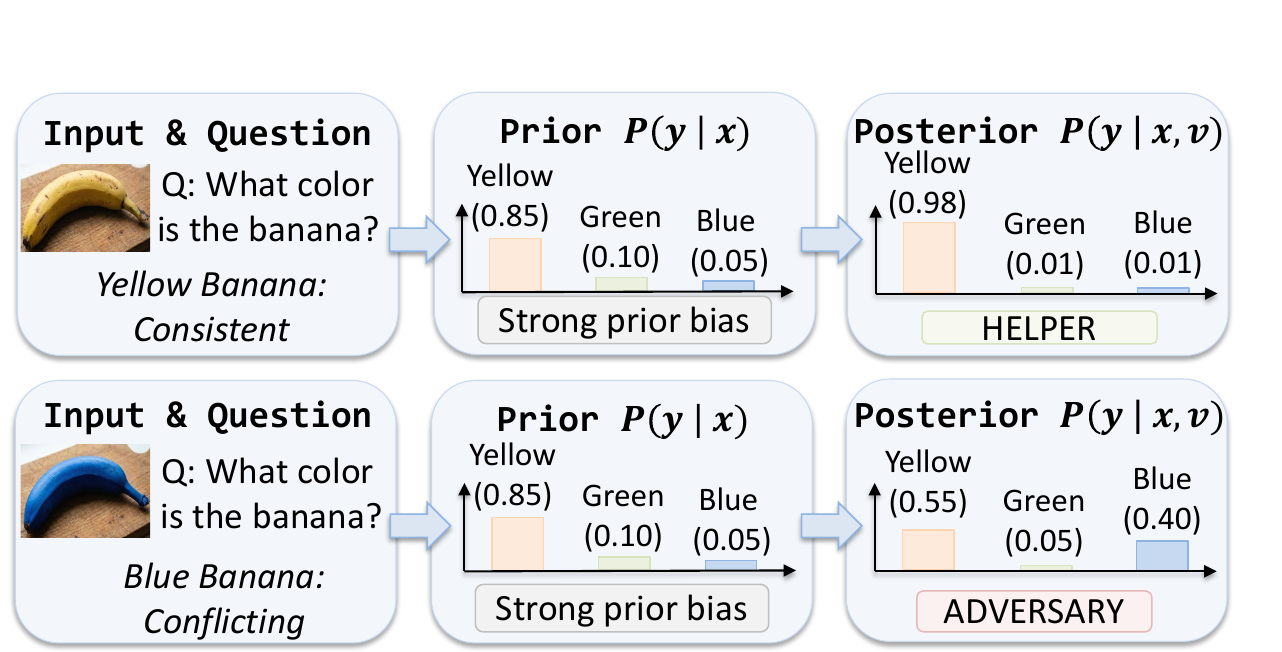}
    
\caption{The dual role of language priors in MLLM decoding.
When visual evidence and priors are aligned (yellow banana), priors sharpen and stabilize generation.
When they conflict (blue banana), priors can override the image and induce hallucination.}

    \label{fig:Intro_Example}
\end{figure}

In recent years, Multimodal Large Language Models (MLLMs)~\cite{qwen_vl, internvl, molmo}
have become strong general-purpose interfaces for vision and language, enabling multimodal reasoning and generation across diverse benchmarks~\cite{mmmu, mmbench, video_mmmu}.
Despite this progress, they remain vulnerable to object hallucination~\cite{pope, hallucination_survey, Rohrbach2018chair}:
the model produces fluent, plausible content that is simply not supported by the image.
This failure is often traced to the model’s language priors learned during pre-training, which can overpower weak or ambiguous visual evidence~\cite{vcd, opera, Li2022contrastive_decoding, Tong2024eyes}.
Motivated by this, a growing line of training-free decoding methods~\cite{vcd, m3id, Chen2024HALC, Wang2024instruction_contrastive, Park2025convis}
attempts to mitigate hallucinations by subtracting a bias term (e.g., an unconditional branch or a contrastive context) from the decoding logits.
However, the core difficulty is that language priors are not uniformly harmful.
As illustrated in Figure~\ref{fig:Intro_Example}, priors play a dual role:
they can trigger hallucinations under vision--language conflict (blue banana),
yet serve as confidence anchors when aligned with the image (yellow banana), sharpening and stabilizing generation.
This makes indiscriminate prior suppression a blunt instrument: it may remove exactly the structure that helps normal, visually consistent cases.

We further show that this unintended degradation has a geometric signature.
Viewed in representation space, valid decoding trajectories concentrate in a structured high-density region  ~\cite{Bengio2013rep_learning, Verma2019manifold_mixup, Stutz2018disentangling, Khoury2018geometry_adv}.
Linear suppression methods apply a global, structure-agnostic shift, which can push hidden states toward low-density regions and destabilize decoding.
We term this failure mode \emph{Manifold Departure} and analyze it in Section~\ref{sec:diagnosis} (Figure~\ref{fig:manifold}).
To resolve this tension, we propose Manifold-Guided Adaptive Projection (\methodName), a geometry-aware, training-free decoding framework.
\methodName{} explicitly models language priors as a low-rank prior subspace estimated from blind hidden states via SVD,
and performs a selective intervention during decoding:
each hidden state is projected onto the prior subspace, and only this projected component is adaptively attenuated when vision--language conflict is detected.

In summary, our main contributions are:
\begin{itemize}
    \item We identify Manifold Departure as a geometric failure mode of linear, training-free prior suppression, explaining why such methods can degrade performance even when priors align with vision.
    \item We propose \methodName, which learns a language-prior subspace from blind states and applies \emph{adaptive, subspace-selective projection} to suppress hallucinations without disturbing orthogonal semantics.
    \item Experiments on POPE~\cite{pope} and CHAIR~\cite{Rohrbach2018chair} show that \methodName{} achieves stronger trade-offs between hallucination mitigation and descriptive fidelity than prior training-free decoding baselines.
\end{itemize}

\section{Related Work}
\label{sec:related}

\paragraph{Hallucination in MLLMs}
Object hallucination is a long-standing issue in vision--language generation and is amplified in modern MLLMs.
Surveys summarize its phenomena and taxonomies~\cite{hallucination_survey}, while benchmarks/protocols quantify object-level ungrounded content~\cite{pope, Li2023eval_hallucination}.
Similar failures are also studied in other generation settings such as neural machine translation~\cite{Guerreiro2022needle}.
Training-time mitigation (e.g., robust instruction tuning, factual alignment, preference optimization) can reduce hallucinations but requires extra supervision or optimization~\cite{Liu2023robust_it, Sun2023rlhf_fact, Zhao2023dpo_hallucination, Jiang2024hallucination_contrastive}.
We instead study training-free, decoding-time control.

\paragraph{Inference-time intervention}
Decoding-time methods mitigate hallucination without parameter updates.
VCD suppresses language priors by contrasting multimodal decoding against a biased/unconditioned branch~\cite{vcd}.
This idea is closely related to contrastive decoding in text generation~\cite{Li2022contrastive_decoding} and has been further explored in subsequent VCD-style analyses~\cite{Lee2024delve_vcd}.
OPERA regulates over-trust via penalty and retrospection-allocation~\cite{opera}.
Other approaches include instruction contrastive decoding~\cite{Wang2024instruction_contrastive}, adaptive focal contrast decoding~\cite{Chen2024HALC}, and introspective/ensemble decoding~\cite{Huo2025self_introspective, Cho2025attention_ensemble, Park2025convis}.
Many can be seen as global, structure-agnostic linear manipulations in representation/logit space, which may hurt outputs even when priors align with vision.
Our method instead models a prior subspace and intervenes selectively in a geometry-aware manner. We additionally compare MGAP against recent decoding-time methods
including DeCo~\cite{wang2024mllmseedynamiccorrection} and MoD~\cite{su-etal-2025-activation}.
MGAP consistently achieves the best overall trade-off between hallucination mitigation and generation fidelity across both backbones.

\paragraph{Manifold structure of latent representations}
Representation geometry is widely used to explain and control model behavior via low-dimensional structure, dominant subspaces, and concentration near a data manifold induced by the training distribution.
Following the common manifold assumption in representation learning,
high-dimensional observations are often assumed to lie on or near a smooth low-dimensional manifold \cite{roweis2000lle,tenenbaum2000isomap,bengio2013repr}.
In vision, transformer representations under large-scale pretraining exhibit strong geometric regularities~\cite{Dosovitskiy2020vit, He2022mae, Beyer2023flexivit}.
Related work measures cross-layer/model alignment using SVCCA/CKA-style diagnostics~\cite{Raghu2017SVCCA, Kornblith2019CKA, Morcos2018insights, Li2018measuring}.
Decoding-time behavior in language generation has also been connected to representation-level objectives and contrastive directions~\cite{Li2022contrastive_decoding}.
Building on this view, we adopt the same geometric perspective for MLLM hidden states: we estimate a low-rank language-prior subspace from blind hidden states and apply bounded, selective projection to mitigate hallucination while staying near the data manifold.

\section{Preliminary}
\label{sec:analysis}


In this section, we introduce notation for multimodal decoding and summarize training-free linear suppression under a unified form.
We then clarify our manifold viewpoint that motivates geometry-aware decoding interventions.

\subsection{Representation-Level View of Training-Free Decoding}

Given an image $v$ and a textual query $x$, a multimodal large language model  generates tokens autoregressively.
At decoding step $t$, the next-token distribution is given by
\begin{equation}
P(y_t \mid y_{<t}, x, v) = \mathrm{Softmax}(\mathcal{F}(h_t)),
\end{equation}
where $h_t \in \mathbb{R}^d$ denotes the final-layer hidden state.

Recent training-free hallucination mitigation methods intervene at decoding time by linearly suppressing language priors.
Despite different formulations, many such methods~\cite{vcd,Wang2024instruction_contrastive,CODE} can be unified as
\begin{equation}
\mathrm{Logits}_{\mathrm{final}}
= \mathrm{Logits}_{\mathrm{main}} - \rho \cdot \mathrm{Logits}_{\mathrm{bias}},
\label{eq:linear_decoding}
\end{equation}
where $\mathrm{Logits}_{\mathrm{bias}}$ is obtained from a biased context (e.g., blind or perturbed inputs).

Importantly, Eq.~\eqref{eq:linear_decoding} implicitly assumes that hallucinations can be mitigated by a global linear translation in representation space.
In this work, we analyze and intervene directly on the hidden representation $h_t$, enabling a geometric understanding of why such linear suppression may fail.

\subsection{Hidden-State Manifold and Manifold Departure}
\label{sec:manifold_def}

Let $h_t \in \mathbb{R}^{d}$ denote the (last-layer) decoder hidden state at decoding step $t$.
Following the common manifold assumption in representation learning,
high-dimensional observations are often assumed to lie on or near a smooth
low-dimensional manifold \cite{roweis2000lle,tenenbaum2000isomap,bengio2013repr}.
We adopt the same view for MLLM hidden states.

\begin{definition}[Hidden-state manifold]
\label{def:hidden_manifold}
Let $P_0$ denote the distribution of hidden states produced by \emph{normal decoding}
(without any decoding-time intervention).
We say the hidden states concentrate near a $r$-dimensional manifold $\mathcal{M}\subset\mathbb{R}^d$ ($r \ll d$)
if there exist $\varepsilon>0$ and a small $\delta\in(0,1)$ such that:
\begin{equation}
\Pr_{h\sim P_0}\!\big[\, \mathrm{dist}(h,\mathcal{M}) \le \varepsilon \,\big] \ge 1-\delta .
\label{eq:manifold_tube}
\end{equation}
Here $\Pr_{h\sim P_0}[\cdot]$ denotes probability when $h$ is sampled from $P_0$.
Eq.~\eqref{eq:manifold_tube} means that at least a $(1-\delta)$ fraction of normal-decoding states lie within
an $\varepsilon$-neighborhood (an ``$\varepsilon$-tube'') around $\mathcal{M}$.
We define the Euclidean distance to the manifold as:
\begin{equation}
\mathrm{dist}(h,\mathcal{M}) \triangleq \inf_{m\in\mathcal{M}}\|h-m\|_2 .
\end{equation}
We refer to $\mathcal{M}$ as the \emph{semantic manifold} of normal decoding.
\end{definition}

\paragraph{A computable proxy}
Since $\mathcal{M}$ and $\mathrm{dist}(h,\mathcal{M})$ are not directly computable, we approximate them using
a reference bank of hidden states $\mathcal{S}=\{h^{(i)}\}_{i=1}^{N}$ collected from normal decoding
(gray points in Fig.~\ref{fig:manifold}).
For any query state $h\in\mathbb{R}^d$, let $\mathrm{NN}_k(h;\mathcal{S})\subset\mathcal{S}$ denote the set of its
$k$ nearest neighbors in $\mathcal{S}$ under the chosen metric (default: Euclidean distance).
We define an off-manifoldness score by the average $k$NN distance:
\begin{equation}
d_k(h;\mathcal{S}) \triangleq \frac{1}{k}\sum_{s\in \mathrm{NN}_k(h;\mathcal{S})}\|h-s\|_2 .
\label{eq:knn_score}
\end{equation}
(Using cosine distance after $\ell_2$-normalization is an alternative in high dimensions.)

\begin{definition}[Manifold departure]
\label{def:manifold_departure}
Consider an intervention $T$ applied at step $t$ that maps $h_t$ to $\tilde h_t = T(h_t)$.
We say $T$ induces \emph{manifold departure} at step $t$ if:
\begin{equation}
d_k(\tilde h_t;\mathcal{S}) > \tau ,
\label{eq:departure_def}
\end{equation}
where the threshold $\tau$ is set as the $(1-\delta)$-quantile of the reference scores
$\{d_k(s;\mathcal{S})\}_{s\in\mathcal{S}}$, i.e., only an $\delta$ fraction of normal-decoding states
would be labeled off-manifold under this criterion.
\end{definition}

\subsection{Manifold Departure from Prior Suppression}
\label{sec:diagnosis}

We identify a consistent \emph{failure mechanism} underlying many training-free linear decoding interventions:
\emph{indiscriminate} suppression of language-prior components can push hidden states away from the normal-decoding semantic manifold (Definition~\ref{def:hidden_manifold}),
thereby degrading generation quality even when the language prior is semantically helpful.

\paragraph{Empirical observation}
\begin{figure}[htbp]
    \centering
    \includegraphics[width=0.48\textwidth]{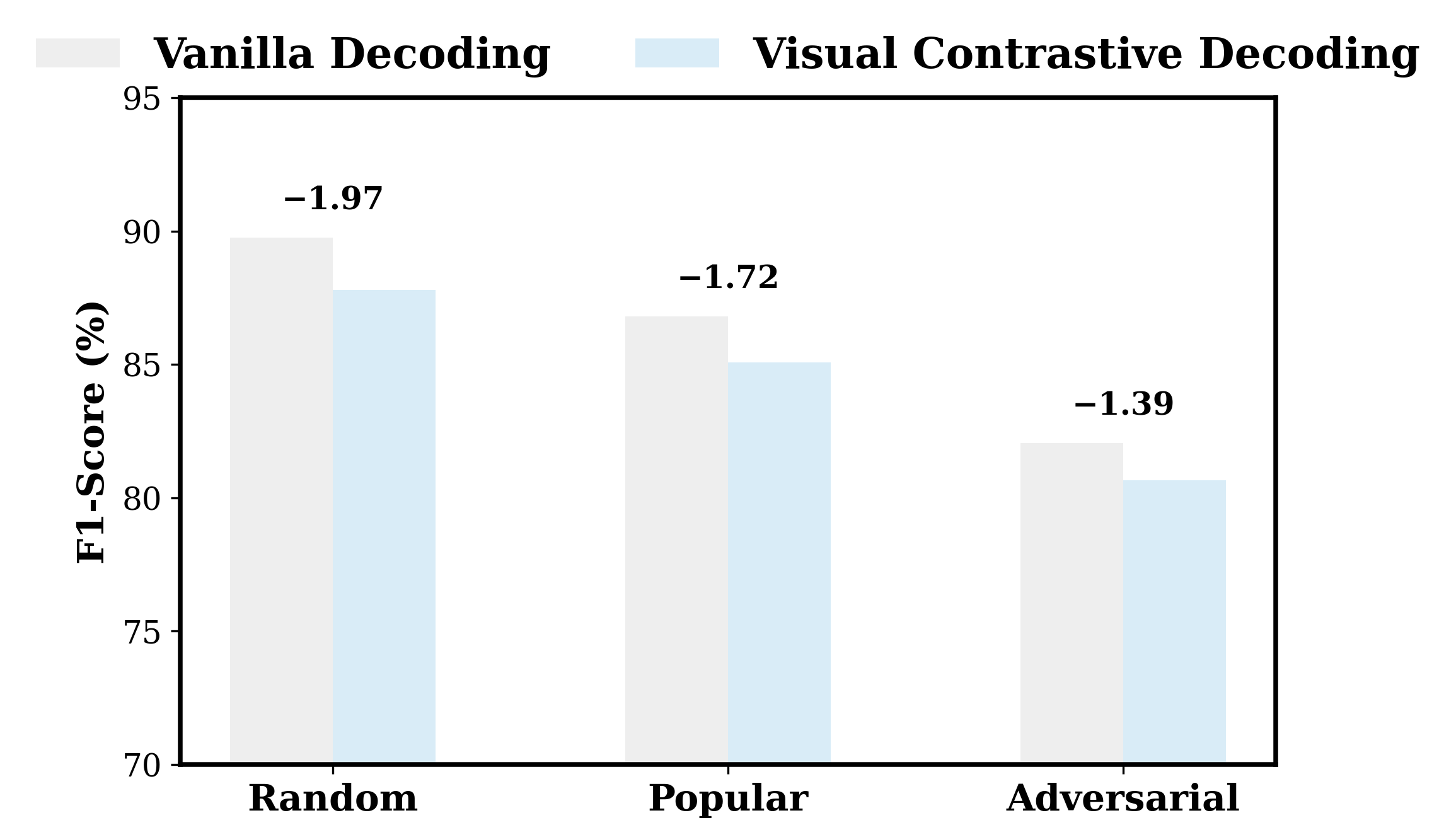}   
    \caption{Uniform linear prior suppression can hurt.
    VCD (vs.\ vanilla) shows consistent performance drops on POPE across splits,
    including standard cases where priors align with visual evidence.}
    \label{fig:vcd_degradation}
\end{figure}
We compare vanilla decoding and Visual Contrastive Decoding (VCD)~\cite{vcd} on the POPE benchmark using LLaVA v1.5-7B.
As shown in Figure~\ref{fig:vcd_degradation}, VCD consistently degrades performance across all evaluation splits.
Notably, this degradation also occurs in standard cases where visual evidence and language priors are naturally aligned,
suggesting a systematic limitation of applying the same linear suppression regardless of context.

\paragraph{Geometric mechanism}
To understand \emph{why} such degradation happens, we analyze hidden representations through the lens of the semantic manifold
defined in Section~\ref{sec:manifold_def}.
We focus on the final-layer hidden state $h_t \in \mathbb{R}^{D}$ and its evolution under decoding-time interventions.

We extract hidden states from normal decoding trajectories on the COCO dataset and visualize their distribution using Principal Component Analysis (PCA).
As shown in Figure~\ref{fig:manifold}, hidden states corresponding to valid generations concentrate within a structured, low-dimensional region of the representation space,
which we interpret as the normal-decoding semantic manifold (Definition~\ref{def:hidden_manifold}).
Linear extrapolation along the VCD direction, however, progressively moves states away from this region,
as quantified by increasing off-manifoldness scores shown in the inset.
\begin{figure}[htbp]
    \centering
    \includegraphics[width=0.48\textwidth]{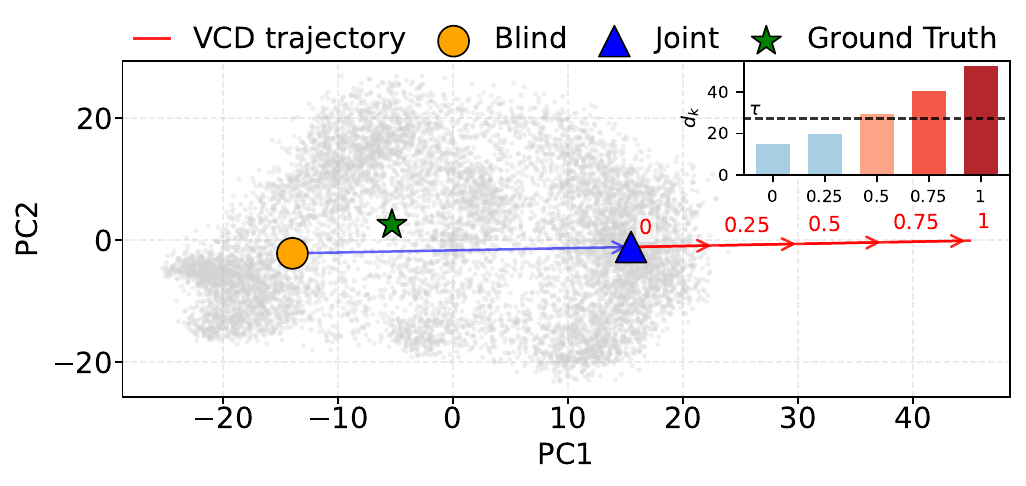}
    \caption{Manifold departure under linear suppression.
    Gray points are hidden states from normal decoding (semantic manifold);
    orange/blue/green mark blind prior, joint posterior, and ground-truth states.
    Red numeric labels denote different values of the VCD extrapolation coefficient $\alpha$.
    \emph{Inset}: kNN-based off-manifoldness score $d_k$ of VCD states at different $\alpha$, computed in the original hidden space.
    The dashed line indicates the threshold $\tau$ used to define manifold departure
    (Eq.~\eqref{eq:departure_def});
    bars above $\tau$ are increasingly shaded to reflect larger deviations from the semantic manifold.}

    \label{fig:manifold}
\end{figure}

In contrast, linear prior suppression applies a fixed contrastive shift that extrapolates along the difference between the blind (prior) and joint (visual) contexts,
e.g.,
\begin{equation}
h' \;=\; h_{\text{joint}} + \rho\bigl(h_{\text{joint}}-h_{\text{blind}}\bigr).
\label{eq:vcd_shift}
\end{equation}
Here $h_{\text{blind}}$ follows VCD~\cite{vcd} and is obtained by running the model on a visually distorted input, where the original image is perturbed with diffusion noise to form the blind view.
Such a global linear extrapolation is agnostic to the local geometry of the normal-decoding manifold.
Even when $h_{\text{joint}}$ lies in a typical (high-density) region, stepping along the direction $(h_{\text{joint}}-h_{\text{blind}})$ can move the state into a tail region
that is rarely visited under normal decoding.
Formally, this corresponds to increased off-manifoldness (Eq.~\eqref{eq:knn_score}) and can trigger the manifold-departure condition
$d_k(\tilde h_t;\mathcal{S})>\tau$ (Definition~\ref{def:manifold_departure}),
which is visualized along the VCD trajectory in Figure~\ref{fig:manifold} (inset). Once the hidden state departs from the high-density region supported by normal decoding, the decoder operates in a poorly supported regime,
often leading to less stable token distributions and degraded generation quality (Fig~\ref{fig:vcd_degradation}).

Importantly, this failure arises from the structure-agnostic nature of uniform suppression,
rather than from the intrinsic correctness or incorrectness of the language prior.
This diagnosis motivates a selective and geometry-aware intervention, which we introduce next.

\begin{figure*}[t]
  \centering
  \includegraphics[width=\textwidth, trim=4.2cm 4.7cm 6.7cm 3.7cm, clip]{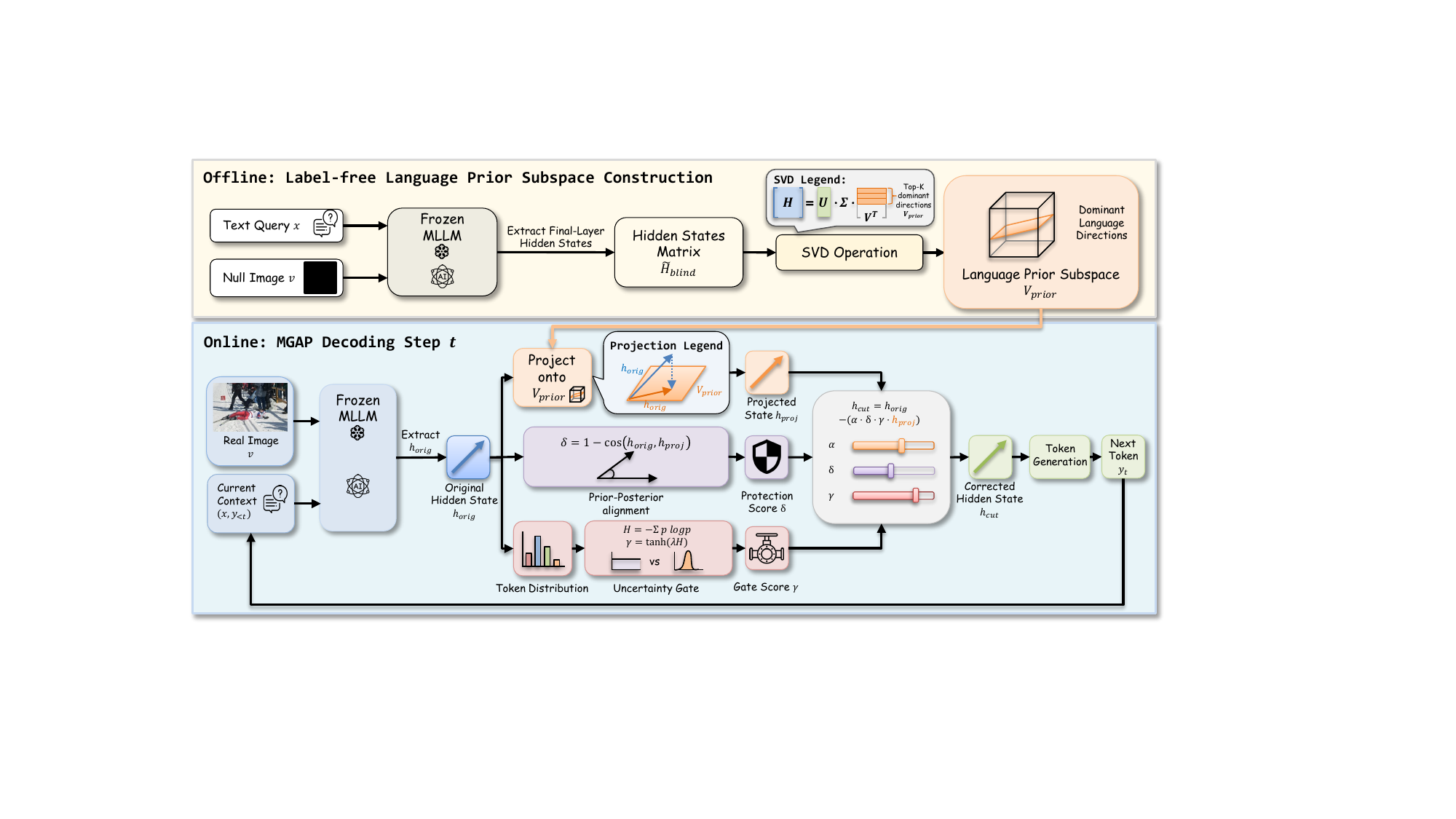}
  \caption{Overview of the proposed Manifold-Guided Adaptive Projection (MGAP).
  The language-prior subspace is constructed offline using unlabeled blind inputs,
  while decoding-time intervention is performed online via geometry-aware adaptive projection.}
  \label{fig:framework}
\end{figure*}

\begin{table*}[t]
\centering
\caption{Main results on the POPE benchmark.
We report Accuracy (\%), Precision (\%), and F1-score (\%) on three splits.
Our method consistently outperforms existing training-free decoding strategies across both backbones.}
\label{tab:pope_main}
\setlength{\tabcolsep}{7.2pt}
\renewcommand{\arraystretch}{1}

\begin{tabular}{llccccccccc}
\toprule
\multirow{2}{*}{Backbone} & \multirow{2}{*}{Method}
& \multicolumn{3}{c}{Random}
& \multicolumn{3}{c}{Popular}
& \multicolumn{3}{c}{Adversarial}
\\
\cmidrule(lr){3-5}
\cmidrule(lr){6-8}
\cmidrule(lr){9-11}

& & Acc. & Prec. & F1
& Acc. & Prec. & F1
& Acc. & Prec. & F1 \\
\midrule

\multirow{7}{*}{LLaVA v1.5 7B}

& Vanilla
& 88.88 & 89.70 & \underline{89.76}
& 86.23 & 83.28 & 86.80
& 80.16 & 74.93 & 82.05 \\

& VCD
& 87.57 & 86.23 & 87.79
& 84.23 & 80.76 & 85.08
& 78.56 & 73.47 & 80.66 \\

& ICD
& 89.47 & \underline{92.84} & 89.04
& 87.50 & 89.04 & 87.24
& 82.70 & 81.10 & 83.13 \\

& HalTrapper
& 88.67 & 88.77 & 88.65
& 85.40 & 83.31 & 85.84
& 80.30 & 76.02 & 81.80 \\

& DeCo
& \underline{89.86} & 92.41 & 89.58
& \underline{87.72} & \underline{89.36} & \underline{87.31}
& \underline{83.18} & \underline{82.47} & \underline{83.71} \\

& MoD
& 89.24 & 91.37 & 89.17
& 87.03 & 87.96 & 86.91
& 82.51 & 80.74 & 83.02 \\

\rowcolor{blue!8}
& Ours
& \textbf{90.63} & \textbf{93.69} & \textbf{90.29}
& \textbf{88.10} & \textbf{91.50} & \textbf{87.59}
& \textbf{84.59} & \textbf{85.06} & \textbf{84.46} \\

\midrule

\multirow{7}{*}{Qwen3-VL 8B}

& Vanilla
& 91.53 & 89.08 & 91.79
& 84.20 & 78.28 & 85.70
& 80.00 & 73.20 & 82.56 \\

& VCD
& 90.67 & 88.80 & 90.83
& 83.27 & 79.41 & 84.53
& 81.36 & 74.92 & 83.60 \\

& ICD
& 91.74 & 91.21 & 91.75
& 84.80 & 80.63 & 85.90
& 82.52 & 76.53 & \underline{83.89} \\

& HalTrapper
& 91.60 & \underline{92.51} & 91.93
& 85.94 & 82.20 & \underline{86.06}
& 82.14 & 78.45 & 82.57 \\

& DeCo
& \textbf{91.96} & 92.36 & \textbf{91.98}
& \underline{86.01} & \underline{83.02} & 83.81
& \underline{82.74} & \underline{79.18} & 83.81 \\

& MoD
& 91.87 & 91.94 & 91.86
& 85.76 & 82.71 & 83.74
& 82.61 & 78.92 & 83.74 \\

\rowcolor{blue!8}
& Ours
& \underline{91.83} & \textbf{93.67} & \underline{91.94}
& \textbf{86.40} & \textbf{84.13} & \textbf{86.84}
& \textbf{83.13} & \textbf{79.30} & \textbf{84.17} \\

\bottomrule
\end{tabular}
\end{table*}

\begin{table*}[t]
\centering
\caption{Results on the CHAIR benchmark.
Lower CHAIR scores indicate fewer hallucinated objects.
Our method significantly reduces hallucination while maintaining competitive precision and F1.}
\label{tab:chair_main}
\setlength{\tabcolsep}{11pt} 
\renewcommand{\arraystretch}{1}

\begin{tabular}{lcccccccc}
\toprule
\multirow{2}{*}{Method}
& \multicolumn{4}{c}{LLaVA v1.5 7B}
& \multicolumn{4}{c}{Qwen3-VL 8B} \\
\cmidrule(lr){2-5}\cmidrule(lr){6-9}
& CHAIRs$\downarrow$ & CHAIRi$\downarrow$ & Prec. & F1
& CHAIRs$\downarrow$ & CHAIRi$\downarrow$ & Prec. & F1 \\
\midrule
Vanilla & 47.4 & 23.5 & 70.8 & 65.0 & 33.6 & 8.7 & 81.5 & 67.7 \\
HalTrapper & \underline{47.2} & 23.5 & 70.6 & 64.9 & \underline{30.8} & 8.8 & \underline{81.6} & 62.3 \\
VCD & 52.8 & 15.8 & 72.6 & 76.5 & 38.4 & 11.7 & 79.5 & \textbf{68.8} \\
ICD & 51.8 & \underline{14.7} & 73.7 & \underline{77.4} & 34.2 & \underline{8.4} & 81.4 & \underline{68.5} \\
CODE & 49.8 & 13.8 & \underline{76.0} & 75.7 & 31.4 & 8.7 & \underline{81.6} & 64.0 \\
\rowcolor{blue!8} Ours & \textbf{26.2} & \textbf{7.6} & \textbf{85.9} & \textbf{77.4} & \textbf{26.8} & \textbf{8.1} & \textbf{83.9} & 66.2 \\
\bottomrule
\end{tabular}
\end{table*}

\section{Method}


\label{sec:method}

\subsection{Label-Free Language Prior Subspace Construction}
\label{sec:prior_subspace}

Motivated by the diagnosis in Section~\ref{sec:diagnosis}, we model language priors as a low-dimensional subspace in the hidden-state space.
We construct this subspace using only queries, requiring no labels, image content, or parameter updates.

Given prompts $\{x^{(i)}\}_{i=1}^{N}$, we collect final-layer blind hidden states $\{h_{\mathrm{blind}}^{(i)}\in\mathbb{R}^{d}\}_{i=1}^{N}$.
Let $\bar h_{\mathrm{blind}} \triangleq \frac{1}{N}\sum_{i=1}^{N} h_{\mathrm{blind}}^{(i)}$ and define the centered matrix
\begin{equation}
\tilde H_{\mathrm{blind}} \triangleq 
\begin{bmatrix}
(h_{\mathrm{blind}}^{(1)}-\bar h_{\mathrm{blind}})^\top \\
\vdots \\
(h_{\mathrm{blind}}^{(N)}-\bar h_{\mathrm{blind}})^\top
\end{bmatrix}
\in \mathbb{R}^{N\times d}.
\label{eq:centered_blind_matrix}
\end{equation}
We estimate the prior subspace as the top-$K$ principal components, i.e.,
\begin{equation}
V_{\mathrm{prior}} \in \arg\max_{V\in\mathbb{R}^{d\times K}}
\ \mathrm{Tr}\!\left(V^\top \tilde H_{\mathrm{blind}}^\top \tilde H_{\mathrm{blind}} V\right).
\label{eq:pca_objective}
\end{equation}
Here $\mathrm{Tr}(\cdot)$ denotes the trace (sum of diagonal entries); 
$\mathrm{Tr}\!\left(\cdot \right)$
is the total variance of blind hidden states captured by the $K$ orthonormal directions $V_\mathrm{prior}=[v_1,\ldots,v_K]$, which has the closed-form solution via SVD:
\begin{equation}
\tilde H_{\mathrm{blind}} = U \Sigma V^\top,\qquad
V_{\mathrm{prior}} \triangleq V_{[:,1:K]} \in \mathbb{R}^{d\times K}.
\label{eq:svd_prior}
\end{equation}
$V_{\mathrm{prior}}$ captures dominant blind-state variations induced by linguistic regularities; it is \emph{not} assumed inherently harmful, motivating adaptive (rather than uniform) suppression in subsequent decoding.

\subsection{Consistency-Aware Adaptive Projection}
\label{sec:adaptive_projection}

Given a hidden state $h_{\mathrm{orig}}$ produced during standard multimodal decoding, we first decompose it into components aligned with and orthogonal to the language prior subspace.
The projection onto the prior subspace is given by:
\begin{equation}
h_{\mathrm{proj}} = V_{\mathrm{prior}} V_{\mathrm{prior}}^\top h_{\mathrm{orig}}.
\end{equation}

While $h_{\mathrm{proj}}$ captures prior-related components, indiscriminately removing it may cause Manifold Departure.
We therefore introduce an adaptive mechanism that selectively suppresses prior components only when they conflict with visual evidence.

\paragraph{Prior-Posterior alignment}
We quantify the agreement between the original hidden state and its prior projection using cosine similarity:
\begin{equation}
\delta = 1 - \cos(h_{\mathrm{orig}}, h_{\mathrm{proj}}).
\end{equation}
A small $\delta$ indicates that the current state is consistent with the prior subspace and does not require additional suppression,
whereas a large $\delta$ signals a mismatch that warrants stronger attenuation of prior components.

\paragraph{Uncertainty-Aware gating}
Hallucinations are often accompanied by increased predictive uncertainty.
We measure uncertainty using the Shannon entropy of the token distribution:
\begin{equation}
H = - \sum_y p(y) \log p(y),
\end{equation}
and derive a gating factor:
\begin{equation}
\gamma = \tanh(\lambda H),
\end{equation}
which amplifies intervention when generation is unstable and suppresses it when the model is confident.

\paragraph{Adaptive projection}
The corrected hidden state is computed as:
\begin{equation}
h_{\mathrm{cut}} = h_{\mathrm{orig}} - \alpha \cdot \gamma \cdot \delta \cdot h_{\mathrm{proj}}.
\end{equation}
Equivalently, letting $\beta=\alpha\cdot \gamma\cdot \delta$, MGAP applies
$h_{\mathrm{cut}} = h_{\mathrm{orig}} - \beta h_{\mathrm{proj}}$.
When visual evidence and language priors are consistent, both $\gamma$ and $\delta$ become small, reducing the operation to an approximate identity mapping.
As a result, MGAP avoids the global linear translation that causes Manifold Departure in prior methods.

\subsection{Theoretical Insights}
\label{sec:theory_summary}

To clarify why MGAP avoids the indiscriminate behavior of uniform linear suppression,
we summarize three theoretical properties below.
The proofs are provided in Appendix~\ref{app:theory}.

\begin{theorem}[Distance decrease under prior-aligned residual] 
\label{thm:toward_gt}
Let $h_{\mathrm{orig}}$ denote the original decoding hidden state,
$h_{\mathrm{proj}}$ its projection onto the learned prior subspace, and
$h_{\mathrm{cut}}:=h_{\mathrm{orig}}-\beta h_{\mathrm{proj}}$ the MGAP-corrected state with $\beta>0$.
Under a \emph{prior-aligned residual} condition (formalized in Appendix~\ref{app:theory}),
MGAP yields:
\begin{equation}
\|h_{\mathrm{cut}}-h_{\mathrm{gt}}\|^2 < \|h_{\mathrm{orig}}-h_{\mathrm{gt}}\|^2.
\end{equation}
\end{theorem}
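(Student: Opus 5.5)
The argument is a direct expansion of the squared distance. We write $e \triangleq h_{\mathrm{orig}}-h_{\mathrm{gt}}$ for the residual and $P\triangleq V_{\mathrm{prior}}V_{\mathrm{prior}}^\top$ for the orthogonal projector onto the prior subspace, so that $h_{\mathrm{proj}}=Ph_{\mathrm{orig}}$ and $h_{\mathrm{cut}}-h_{\mathrm{gt}} = e-\beta h_{\mathrm{proj}}$. Expanding the square gives
\begin{equation*}
\|h_{\mathrm{cut}}-h_{\mathrm{gt}}\|^2 = \|e\|^2 - 2\beta\langle e, h_{\mathrm{proj}}\rangle + \beta^2\|h_{\mathrm{proj}}\|^2 .
\end{equation*}
The claimed strict decrease is therefore equivalent, for $\beta>0$ and $h_{\mathrm{proj}}\neq 0$, to
\begin{equation*}
2\langle e, h_{\mathrm{proj}}\rangle > \beta\|h_{\mathrm{proj}}\|^2 .
\end{equation*}

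We would formalize the \emph{prior-aligned residual} condition so that exactly this inequality follows. It says the error of the original state has a positive component along the prior projection:
\begin{equation*}
\langle h_{\mathrm{orig}}-h_{\mathrm{gt}},\, h_{\mathrm{proj}}\rangle \ge \kappa\,\|h_{\mathrm{proj}}\|^2 \quad\text{for some } \kappa>0 .
\end{equation*}
In words, the hallucination-inducing excess lives in the prior directions. An equivalent form uses $P$ being symmetric and idempotent, which gives $\langle e, Ph_{\mathrm{orig}}\rangle = \langle Pe, Ph_{\mathrm{orig}}\rangle$. Under this form only the in-subspace part of the residual matters, which matches the subspace-selective design. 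The natural sufficient example is $Ph_{\mathrm{gt}}=(1-\kappa)Ph_{\mathrm{orig}}$: the ground truth carries less prior mass than the original state.

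With the condition in hand, the proof closes by requiring $0<\beta<2\kappa$. Then
\begin{equation*}
\|h_{\mathrm{cut}}-h_{\mathrm{gt}}\|^2 \le \|e\|^2-\beta(2\kappa-\beta)\|h_{\mathrm{proj}}\|^2 < \|e\|^2 .
\end{equation*}
We should also note that $\beta$ is admissible in practice. Since $\beta=\alpha\gamma\delta$ with $\gamma=\tanh(\lambda H)\in[0,1)$ and $\delta=1-\cos(h_{\mathrm{orig}},h_{\mathrm{proj}})\in[0,1]$ (the cosine is nonnegative because $\langle h_{\mathrm{orig}},Ph_{\mathrm{orig}}\rangle=\|Ph_{\mathrm{orig}}\|^2\ge0$), we get $\beta<\alpha$. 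Choosing $\alpha\le 2\kappa$ thus guarantees the range for every decoding step. We also need $h_{\mathrm{proj}}\neq 0$ for strictness; otherwise the update is the identity.

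The main obstacle is not the algebra but stating the condition so that it is neither vacuous nor circular. It must be strong enough to carry an explicit $\kappa$ tied to $\alpha$; a bare positivity of the inner product only yields decrease for sufficiently small $\beta$. We would first present the $\kappa$-margin version. We would then add a remark that the bare condition $\langle e,h_{\mathrm{proj}}\rangle>0$ gives decrease exactly for $\beta\in\bigl(0,\,2\langle e,h_{\mathrm{proj}}\rangle/\|h_{\mathrm{proj}}\|^2\bigr)$, with the optimal step $\beta^\star=\langle e,h_{\mathrm{proj}}\rangle/\|h_{\mathrm{proj}}\|^2$.
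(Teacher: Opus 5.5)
Your proof is correct and is essentially the paper's argument: the paper also sets $e=h_{\mathrm{orig}}-h_{\mathrm{gt}}$ and expands $\|e-\beta h_{\mathrm{proj}}\|^2=\|e\|^2-2\beta\langle e,h_{\mathrm{proj}}\rangle+\beta^2\|h_{\mathrm{proj}}\|^2$. It takes the prior-aligned condition to be $\langle e,h_{\mathrm{proj}}\rangle>0$ and concludes strict decrease exactly for $0<\beta<2\langle e,h_{\mathrm{proj}}\rangle/\|h_{\mathrm{proj}}\|^2$, with optimal step $\beta^\star=\langle e,h_{\mathrm{proj}}\rangle/\|h_{\mathrm{proj}}\|^2$, which is the version you put in your closing remark. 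Your $\kappa$-margin form is a uniform restatement of that window; it needs the extra hypothesis $h_{\mathrm{proj}}\neq 0$, which the paper's strict positivity gives automatically, but it does let you tie admissibility to $\alpha$ via $\beta<\alpha\le 2\kappa$.
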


\paragraph{Interpretation}
When the current error component lies along the prior projection direction $h_{\mathrm{proj}}$,
subtracting (a suitable amount of) $h_{\mathrm{proj}}$ moves the hidden state closer to the ground-truth-aligned state $h_{\mathrm{gt}}$. This analysis is illustrative and does not assume access to $h_\mathrm{gt}$ at inference


\begin{theorem}[Bounded coefficient and bounded step]
\label{thm:bounded_step}
Assume $\beta:=\alpha\cdot \gamma\cdot \delta$ with $\gamma:=\tanh(\lambda H)$ and
$\delta:=1-\cos(h_{\mathrm{orig}},h_{\mathrm{proj}})$.
Then $0\le \beta<\alpha$, and the update magnitude is bounded:
\begin{equation}
\|h_{\mathrm{cut}}-h_{\mathrm{orig}}\|\le \alpha\|h_{\mathrm{orig}}\|.
\end{equation}
\end{theorem}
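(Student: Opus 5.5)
The proof is a direct verification: bound each factor of $\beta=\alpha\gamma\delta$ separately, then use that $V_{\mathrm{prior}}V_{\mathrm{prior}}^\top$ is an orthogonal projection. I assume throughout that $\alpha>0$ and $\lambda\ge 0$, the natural hyperparameter ranges; both should be stated explicitly in the write-up. I also assume $h_{\mathrm{orig}}\neq 0$ so that the cosine is defined.

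\textbf{Step 1: the gate $\gamma$.} The Shannon entropy of a distribution over a finite vocabulary $\mathcal{V}$ satisfies $0\le H\le \log|\mathcal{V}|<\infty$. Hence $\lambda H\in[0,\infty)$. Since $\tanh$ is increasing with $\tanh(0)=0$ and $\tanh(s)<1$ for every finite $s$, we get $\gamma\in[0,1)$. The strict upper bound relies only on $H$ being finite.

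\textbf{Step 2: the mismatch $\delta$.} Let $P=V_{\mathrm{prior}}V_{\mathrm{prior}}^\top$. By the SVD construction in Eq.~\eqref{eq:svd_prior}, the columns of $V_{\mathrm{prior}}$ are orthonormal, so $P$ is symmetric and idempotent. Therefore
\[
\langle h_{\mathrm{orig}},h_{\mathrm{proj}}\rangle
=h_{\mathrm{orig}}^\top P h_{\mathrm{orig}}
=\|P h_{\mathrm{orig}}\|^2\ge 0 .
\]
This gives $\cos(h_{\mathrm{orig}},h_{\mathrm{proj}})=\|h_{\mathrm{proj}}\|/\|h_{\mathrm{orig}}\|$. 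Because projections are contractions, this ratio lies in $[0,1]$, so $\delta\in[0,1]$.

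The degenerate case $h_{\mathrm{proj}}=0$ leaves the cosine undefined. I would adopt the convention $\cos=0$ there, giving $\delta=1$. Nothing changes in the conclusion, since the update is then zero anyway.

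\textbf{Step 3: the coefficient and the step.} Combining Steps 1 and 2 gives $0\le\beta=\alpha\gamma\delta\le\alpha\gamma<\alpha$. For the step size,
\[
\|h_{\mathrm{cut}}-h_{\mathrm{orig}}\|=\beta\|P h_{\mathrm{orig}}\|\le \beta\|h_{\mathrm{orig}}\|\le\alpha\|h_{\mathrm{orig}}\|,
\]
again using that $P$ is a contraction.

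\textbf{Where the care is needed.} None of the steps is genuinely hard. The two points to get right are:
\begin{itemize}
\item checking that the cosine is nonnegative, which is what keeps $\delta\le 1$ and rests on the orthonormality of $V_{\mathrm{prior}}$;
\item making the hidden assumptions explicit, namely $\alpha>0$, finite entropy (needed for the strict inequality $\beta<\alpha$), and the zero-projection convention.
\end{itemize}
Step 2 combined with Step 3 also yields the sharper bound $\|h_{\mathrm{cut}}-h_{\mathrm{orig}}\|\le\alpha\gamma\|h_{\mathrm{proj}}\|$, which may be worth recording.
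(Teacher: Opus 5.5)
Your proposal is correct and follows the paper's proof: you bound $\gamma=\tanh(\lambda H)\in[0,1)$, use the orthogonal projection to get $\cos(h_{\mathrm{orig}},h_{\mathrm{proj}})=\|h_{\mathrm{proj}}\|/\|h_{\mathrm{orig}}\|\in[0,1]$ so that $\delta\in[0,1]$, and finish with the non-expansiveness of $P=V_{\mathrm{prior}}V_{\mathrm{prior}}^\top$. You also state assumptions the paper leaves implicit — $\alpha>0$ and finite entropy, both needed for the strict bound $\beta<\alpha$, plus the $h_{\mathrm{proj}}=0$ case where the paper's cosine computation divides by zero.
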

\paragraph{Interpretation}
Since $\gamma$ and $\delta$ scales the step size,
the effective coefficient $\beta$ is always bounded by $\alpha$, and the update magnitude is controlled.
This prevents overly large corrections and yields stable decoding-time intervention.


\begin{theorem}[Subspace-selective update]
\label{thm:subspace_selective}
Let $V_{\mathrm{prior}}\in\mathbb{R}^{d\times K}$ be the prior-subspace basis, and the prior projection
$h_{\mathrm{proj}}:=V_{\mathrm{prior}}V_{\mathrm{prior}}^\top h_{\mathrm{orig}}$.
MGAP only changes the prior component and preserves the orthogonal component:
\begin{equation}
h_{\mathrm{cut}}-V_{\mathrm{prior}}V_{\mathrm{prior}}^\top h_{\mathrm{cut}}
=
h_{\mathrm{orig}}-V_{\mathrm{prior}}V_{\mathrm{prior}}^\top h_{\mathrm{orig}}.
\end{equation}
\end{theorem}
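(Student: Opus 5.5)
The plan is to write $P:=V_{\mathrm{prior}}V_{\mathrm{prior}}^\top$ and use only two facts. First, $P$ is linear. Second, $P$ is idempotent, $P^2=P$. Idempotence follows because the columns of $V_{\mathrm{prior}}$ are orthonormal: by Eq.~\eqref{eq:svd_prior} they are the first $K$ right singular vectors of $\tilde H_{\mathrm{blind}}$, so $V_{\mathrm{prior}}^\top V_{\mathrm{prior}}=I_K$. Hence
\begin{equation}
P^2=V_{\mathrm{prior}}(V_{\mathrm{prior}}^\top V_{\mathrm{prior}})V_{\mathrm{prior}}^\top=P .
\end{equation}
It is worth stating this orthonormality explicitly. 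The arg max in Eq.~\eqref{eq:pca_objective} is written over all $V\in\mathbb{R}^{d\times K}$, and the claim actually relies on the SVD choice of $V_{\mathrm{prior}}$ (orthonormal columns), not on an arbitrary maximizer.

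Next, substitute $h_{\mathrm{cut}}=h_{\mathrm{orig}}-\beta P h_{\mathrm{orig}}$, where $\beta=\alpha\gamma\delta$ is treated as a scalar fixed by $h_{\mathrm{orig}}$. Then compute $(I-P)h_{\mathrm{cut}}$ by linearity:
\begin{equation}
(I-P)h_{\mathrm{cut}}=(I-P)h_{\mathrm{orig}}-\beta\,(P-P^2)h_{\mathrm{orig}}=(I-P)h_{\mathrm{orig}} .
\end{equation}
The last equality uses idempotence. The left-hand side is exactly $h_{\mathrm{cut}}-V_{\mathrm{prior}}V_{\mathrm{prior}}^\top h_{\mathrm{cut}}$, and the right-hand side is the orthogonal component of $h_{\mathrm{orig}}$, which is the claimed identity.

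As a complement, I would note that the prior component is simply rescaled: $Ph_{\mathrm{cut}}=(1-\beta)Ph_{\mathrm{orig}}$. This justifies the phrase ``only changes the prior component.''

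There is no real obstacle here. The one point needing care is that $\beta$ depends on $h_{\mathrm{orig}}$ only through the scalars $\gamma$ and $\delta$. It therefore acts as a constant in the computation above, and the nonlinearity of the gate does not interfere with the linear-algebra argument.
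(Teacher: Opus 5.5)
Your proposal is correct and follows essentially the paper's argument. The paper writes $h_{\mathrm{orig}}=h_{\parallel}+h_{\perp}$, obtains $h_{\mathrm{cut}}=(1-\beta)h_{\parallel}+h_{\perp}$, and applies $I_d-P$ using $(I_d-P)h_{\parallel}=0$, which is your idempotence step $(P-P^2)h_{\mathrm{orig}}=0$. Your explicit derivation of $V_{\mathrm{prior}}^\top V_{\mathrm{prior}}=I_K$ from the SVD choice, rather than from the unconstrained arg max (which is ill-posed as written), is a worthwhile addition, since the paper simply assumes orthonormal columns in the appendix.
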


\paragraph{Interpretation}
MGAP only modifies the prior-subspace component and leaves all non-prior (orthogonal) semantics unchanged, avoiding a global shift.

\paragraph{Takeaway}
MGAP applies a bounded, subspace-selective correction: its intervention strength is provably bounded (Theorem~\ref{thm:bounded_step}), while preserving the orthogonal semantics (Theorem~\ref{thm:subspace_selective}) and avoiding a global shift.
Moreover, when the current error is positively aligned with the prior projection (i.e., $\langle h_{\mathrm{orig}}-h_{\mathrm{gt}},\, h_{\mathrm{proj}}\rangle>0$), subtracting (a suitable amount of) $h_{\mathrm{proj}}$ provably moves the hidden state closer to a ground-truth-aligned reference (Theorem~\ref{thm:toward_gt}).

\section{Experiments}
\label{sec:experiments}

\subsection{Experimental Setup}
\label{sec:exp_setup}

\begin{table*}[t]
\centering
\caption{Ablation study on the POPE benchmark. ``w/o Prot'' removes consistency protection, and ``w/o Gate'' removes uncertainty gating. Removing either component leads to unstable  trade-offs.}
\label{tab:pope_ablation}
\setlength{\tabcolsep}{8.5pt} 
\renewcommand{\arraystretch}{1.05}

\renewcommand{\arraystretch}{1} 

\begin{tabular}{llccccccccc}
\toprule
\multirow{2}{*}{\textbf{Backbone}} & \multirow{2}{*}{\textbf{Method}} 
& \multicolumn{3}{c}{\textbf{Random}} 
& \multicolumn{3}{c}{\textbf{Popular}} 
& \multicolumn{3}{c}{\textbf{Adversarial}} \\
\cmidrule(lr){3-5}\cmidrule(lr){6-8}\cmidrule(lr){9-11}
& & Acc. & Prec. & F1 & Acc. & Prec. & F1 & Acc. & Prec. & F1 \\
\midrule
\multirow{3}{*}{LLaVA v1.5 7B}
& w/o Prot & \underline{87.70} & \underline{97.24} & \underline{86.32} & \underline{86.60} & \underline{94.64} & 77.60 & 83.00 & \underline{88.99} & \underline{82.91} \\
& w/o Gate & 86.57 & \textbf{98.41} & 84.69 & 85.63 & \textbf{96.04} & \underline{83.80} & \underline{83.73} & \textbf{91.61} & 82.03 \\
\rowcolor{blue!8} & Ours & \textbf{90.13} & 94.79 & \textbf{89.59} & \textbf{87.90} & 90.35 & \textbf{87.56} & \textbf{83.93} & 83.10 & \textbf{84.00} \\
\midrule
\multirow{3}{*}{Qwen3-VL 8B}
& w/o Prot & 91.63 & 90.25 & \underline{91.82} & 85.15 & \underline{82.39} & 86.02 & 80.95 & \underline{76.40} & \underline{83.52} \\
& w/o Gate & \underline{91.70} & \underline{91.27} & 91.53 & \underline{85.37} & 80.86 & \underline{86.36} & \underline{82.03} & 75.18 & 83.01 \\
\rowcolor{blue!8} & Ours & \textbf{91.83} & \textbf{93.67} & \textbf{91.94} & \textbf{86.40} & \textbf{84.13} & \textbf{86.84} & \textbf{83.13} & \textbf{79.30} & \textbf{84.17} \\
\bottomrule
\end{tabular}

\end{table*}

\begin{table*}[t]
\centering
\caption{Ablation results on the CHAIR benchmark.
We analyze the contribution of consistency protection (\emph{w/o Prot}) and uncertainty-aware gating (\emph{w/o Gate}) on two representative backbones.
Lower CHAIR scores indicate fewer hallucinated objects.}
\label{tab:chair_ablation}
\setlength{\tabcolsep}{12.5pt} 
\renewcommand{\arraystretch}{1.05}

\begin{tabular}{lcccccccc}
\toprule
\multirow{2}{*}{Method}
& \multicolumn{4}{c}{LLaVA v1.5 7B}
& \multicolumn{4}{c}{Qwen3-VL 8B} \\
\cmidrule(lr){2-5}\cmidrule(lr){6-9}
& CHAIRs$\downarrow$ & CHAIRi$\downarrow$ & Prec. & F1
& CHAIRs$\downarrow$ & CHAIRi$\downarrow$ & Prec. & F1 \\

\midrule
Vanilla
& 47.4 & 23.5 & 70.8 & 65.0
& 33.6 & 8.7 & 81.5 & \textbf{67.7} \\

w/o Prot
& \underline{31.6} & 15.2 & \underline{82.2} & \underline{72.8}
& 29.3 & 8.6 & 82.4 & 65.9 \\

w/o Gate
& 35.4 & \underline{8.1} & 80.2 & 71.1
& \underline{28.8} & \underline{8.1} & \underline{83.3} & 66.1 \\

\rowcolor{blue!8} Ours
& \textbf{26.2} & \textbf{7.6} & \textbf{85.9} & \textbf{77.4}
& \textbf{26.8} & \textbf{8.0} & \textbf{83.9} & \underline{66.2} \\
\bottomrule
\end{tabular}
\end{table*}

We evaluate \methodName{} on two representative multimodal large language models,
LLaVA v1.5-7B and Qwen3-VL-8B.
Hallucination mitigation performance is assessed on POPE and CHAIR.
For each backbone, we construct the language prior subspace offline using blind inputs (null image).
We sample $N=50$ textual prompts and collect the corresponding final-layer blind hidden states to form
$\tilde H_{\mathrm{blind}}$ (Eq.~\eqref{eq:centered_blind_matrix}).
We then perform SVD and retain the top-$K$ right singular vectors $V_{\mathrm{prior}}$ with $K=5$ (Eq.~\eqref{eq:svd_prior}).
All experiments are conducted on a single NVIDIA RTX A6000 GPU (48GB).

\paragraph{Baselines.}
We compare \methodName{} with several representative training-free decoding methods:
VCD~\cite{vcd}, which suppresses language priors via contrastive decoding with an unconditional branch;
ICD~\cite{Wang2024instruction_contrastive}, which applies instruction-level contrastive decoding to penalize prior-driven generations;
CODE~\cite{CODE}, which contrasts self-generated descriptions to reduce hallucinations;
and HalTrapper~\cite{haltrapper}, a recent method that attributes hallucinations to contextual accumulation in long responses and mitigates them by dynamically regulating context influence during decoding.
All baselines operate without additional training and are evaluated under their recommended settings.

\subsection{Main Results}
\label{sec:exp_pope}
We evaluate \methodName{} on the POPE benchmark using two representative backbones,
LLaVA v1.5-7B and Qwen3-VL-8B.
Results are summarized in Table~\ref{tab:pope_main}.

\paragraph{Main Results on POPE}

Across all splits and both backbones, \methodName{} consistently achieves the best performance.
Compared to vanilla decoding, our method improves F1 on all POPE splits, while avoiding the performance degradation observed in existing contrastive decoding approaches.
On the \textit{Random} and \textit{Popular} splits, \methodName{} achieves clear precision gains,
indicating more faithful object grounding.
On the \textit{Adversarial} split, which explicitly targets hallucination failure modes,
\methodName{} yields the strongest overall performance on both backbones,
demonstrating robustness under severe visual--linguistic conflict.
Compared with training-free baselines such as VCD, ICD, and HalTrapper,
\methodName{} consistently achieves superior trade-offs across all splits.
In particular, VCD underperforms vanilla decoding even on non-adversarial splits,
confirming that indiscriminate linear suppression of language priors disrupts valid semantic configurations.
By contrast, \methodName{} selectively suppresses prior components only when they conflict with visual evidence,
thereby avoiding unnecessary interference.
As a result, our method is the only approach that simultaneously improves precision and accuracy
while maintaining or improving F1 across all evaluation settings.

\paragraph{Main Results on CHAIR}
\label{sec:exp_chair}
We further evaluate \methodName{} on the CHAIR benchmark, which measures object-level hallucinations in image captioning.
Results are reported in Table~\ref{tab:chair_main}.
Across both backbones, \methodName{} achieves a substantial reduction in hallucination rates,
significantly lowering both CHAIRs and CHAIRi compared to all baselines.
Notably, this reduction is accompanied by clear gains in precision,
indicating that hallucination mitigation is achieved without overly conservative generation.
In contrast, contrastive decoding baselines (e.g., VCD and ICD) exhibit mixed behavior across metrics and backbones,
often improving only one of CHAIRs / CHAIRi while worsening the other, reflecting an unstable trade-off in object grounding.
These results demonstrate that \methodName{} generalizes beyond binary question answering
to open-ended captioning tasks.

\subsection{Generalization to Additional Benchmarks}

\begin{table}[htbp]
\centering
\caption{Results on the AMBER benchmark using LLaVA v1.5 7B. $\downarrow$ indicates lower is better, and $\uparrow$ indicates higher is better. Best results are highlighted in \textbf{bold}.}
\label{tab:amber}
\setlength{\tabcolsep}{6pt} 
\renewcommand{\arraystretch}{1} 

\begin{tabular}{l cccc}
\toprule
\textbf{Model} & \textbf{CHAIR}$\downarrow$ & \textbf{Cover}$\uparrow$ & \textbf{Hal}$\downarrow$ & \textbf{Cog}$\downarrow$ \\
\midrule
LLaVA v1.5 7B & 11.2 & 50.2 & 47.9 & 4.6 \\
+ VCD         & 8.9  & 51.2 & 38.1 & 4.4 \\
+ ICD         & 8.6  & 51.1 & 37.3 & 3.9 \\
+ CODE        & 9.0  & 51.1 & 39.5 & 4.3 \\
+ HalTrapper  & 8.0  & 51.5 & 36.3 & \textbf{3.8} \\
\rowcolor{blue!8} + MGAP (Ours) & \textbf{7.6} & \textbf{51.7} & \textbf{35.1} & \textbf{3.8} \\
\bottomrule
\end{tabular}
\end{table}

\begin{table}[htbp]
\centering
\caption{Inference efficiency comparison on 3000 samples.
We report total inference time and average latency per sample.
Our method is significantly more efficient than contrastive decoding baselines.}
\label{tab:efficiency}
\setlength{\tabcolsep}{15.5pt} 
\renewcommand{\arraystretch}{1.05}

\begin{tabular}{lcc}
\toprule
Method & Total Time & Sec / Sample \\
\midrule
Vanilla & 23:57 & 0.48 \\
VCD & 1:46:54 & 2.14 \\
ICD & 1:45:18 & 2.11 \\
HalTrapper & 1:50:03 & 2.20 \\
\rowcolor{blue!8} Ours & \textbf{52:24} & \textbf{1.05} \\
\bottomrule
\end{tabular}
\end{table}

While POPE primarily evaluates object-level hallucinations through binary probing questions,
it remains unclear whether the learned hallucination priors generalize beyond this specific evaluation protocol.
To further assess the robustness of MGAP, we additionally evaluate on the AMBER benchmark,
which contains more diverse hallucination scenarios and compositional visual reasoning challenges.

\begin{table*}[htbp]
\centering
\caption{Number of off-manifold samples (out of 3000) under different extrapolation coefficients $\alpha$. Lower values indicate that the representations are better constrained within the valid data manifold. Best results are in \textbf{bold}, and second best are \underline{underlined}.}
\label{tab:manifold}
\setlength{\tabcolsep}{21pt} 
\renewcommand{\arraystretch}{1} 

\begin{tabular}{l ccccc}
\toprule
\multirow{2}{*}{Method} & \multicolumn{5}{c}{Extrapolation Coefficient $\alpha$} \\
\cmidrule(lr){2-6}
& $\alpha=0$ & $\alpha=0.25$ & $\alpha=0.5$ & $\alpha=0.75$ & $\alpha=1.0$ \\

\midrule

LLaVA v1.5 7B
& 150 & -- & -- & -- & -- \\
+ VCD
& 150 & 515 & 2589 & 2999 & 3000 \\
+ ICD
& 150 & \underline{416} & \underline{679} & \underline{1175} & \underline{1887} \\
+ CODE
& 150 & 775 & 1642 & 2403 & 2944 \\
+ HalTrapper
& 150 & 2098 & 2672 & 2918 & 2989 \\
\rowcolor{blue!8} + MGAP (Ours)
& \textbf{150} & \textbf{76} & \textbf{210} & \textbf{285} & \textbf{402} \\

\midrule

Qwen3-VL 8B
& 121 & -- & -- & -- & -- \\
+ VCD
& 121 & 448 & 2386 & 2979 & 3000 \\
+ ICD
& 121 & \underline{362} & \underline{694} & \underline{1187} & \underline{1804} \\
+ CODE
& 121 & 724 & 1548 & 2297 & 2886 \\
+ HalTrapper
& 121 & 1961 & 2604 & 2881 & 2970 \\
\rowcolor{blue!8} + MGAP (Ours)
& \textbf{121} & \textbf{14} & \textbf{32} & \textbf{363} & \textbf{438} \\
\bottomrule
\end{tabular}
\end{table*}

Table~\ref{tab:amber} shows that MGAP consistently improves over existing decoding-time baselines across both backbones.
In particular, MGAP achieves stronger hallucination mitigation while better preserving generation fidelity,
suggesting that the learned prior subspace captures hallucination-inducing directions that generalize across benchmarks.

These results indicate that hallucination priors are not merely artifacts of the POPE benchmark,
but instead exhibit transferable structural properties in the representation space.
This supports our hypothesis that selective prior correction can serve as a more general mechanism
for mitigating hallucinations in large vision-language models.

\subsection{Representation Analysis}

To better understand the behavior of MGAP,
we analyze the representation changes introduced during decoding.
Specifically, we measure the average projection magnitude
onto the learned prior subspace for different decoding strategies.

As shown in Table~\ref{tab:manifold},
uniform suppression methods tend to produce larger global representation shifts,
whereas MGAP introduces more localized corrections within the learned subspace.
This behavior is consistent across both backbones.

These observations align with the design motivation of MGAP.
Instead of globally suppressing decoding trajectories,
MGAP performs targeted representation correction using the learned prior subspace,
which helps mitigate hallucinations while better preserving semantic fidelity.

\subsection{Ablation Study}
\label{sec:exp_ablation}

We conduct ablation studies to analyze the contribution of each component in \methodName{}, namely the \emph{consistency-based protection} and the \emph{uncertainty-aware gating}.
Specifically, we consider two variants:
(1) \textbf{w/o Prot}, which removes the consistency-based protection term, and
(2) \textbf{w/o Gate}, which removes the uncertainty-aware gating mechanism.
Results are reported on both POPE and CHAIR benchmarks.

\begin{figure*}[tb]
    \centering
    \includegraphics[width=\textwidth, trim=0cm 8.5cm 0cm 1cm, clip]{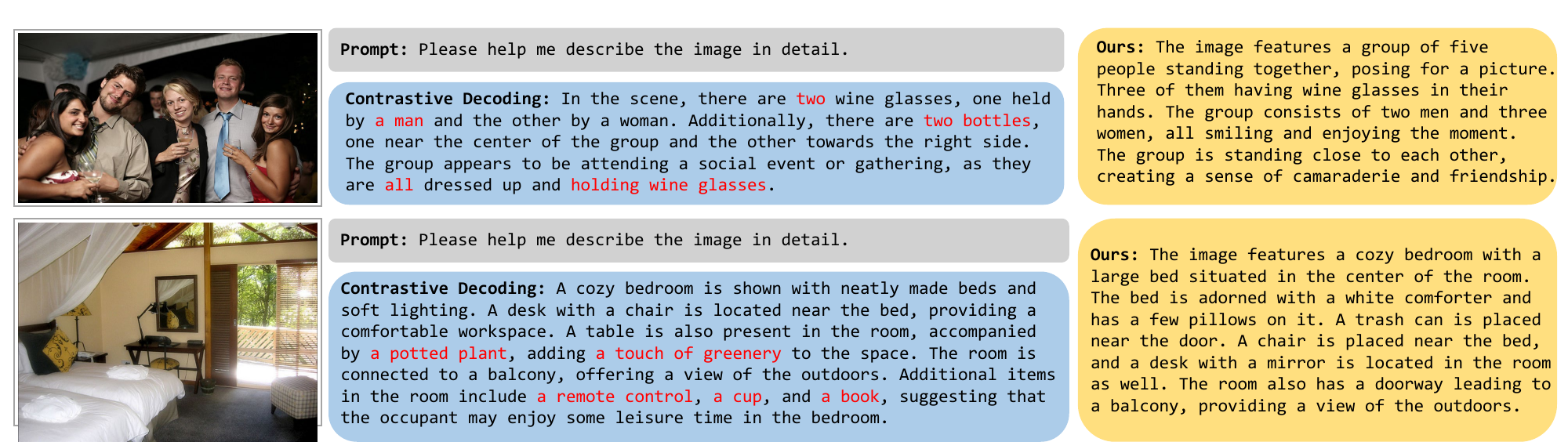}
    \caption{
   Qualitative comparison on the CHAIR benchmark.
VCD introduces hallucinated object mentions (highlighted in red), while \methodName{} produces grounded descriptions without hallucinated objects.
    }
    \label{fig:case_study}
\end{figure*}

\paragraph{Ablation Results on POPE}

Table~\ref{tab:pope_ablation} summarizes the ablation results on the POPE benchmark.
Removing either component leads to a noticeable degradation in overall performance, confirming that both are necessary for stable hallucination mitigation.
A notable phenomenon is that both w/o Prot and w/o Gate variants exhibit abnormally high precision, particularly on LLaVA v1.5-7B.
However, this precision gain comes at the cost of significantly reduced F1 scores and Accuracy.
This behavior indicates an imbalanced trade-off: the model becomes overly conservative, suppressing a large portion of predictions to avoid hallucinations, but failing to correctly answer valid questions.
In contrast, the full \methodName{} achieves a more balanced trade-off.
By jointly modeling visual--linguistic consistency and generation uncertainty, our method selectively suppresses prior components only when necessary.
As a result, \methodName{} maintains high precision while substantially improving F1 score, demonstrating stable and reliable hallucination mitigation.

\paragraph{Ablation Results on CHAIR}

Results on CHAIR in Table~\ref{tab:chair_ablation} show a consistent trend.
While removing either protection or gating reduces hallucination rates compared to vanilla decoding,
both variants exhibit inferior precision and F1 compared to the full model,
indicating that both components are necessary for stable and accurate grounding.


\subsection{Efficiency Analysis}
\label{sec:exp_efficiency}

In addition to effectiveness, inference efficiency is a critical factor for training-free decoding methods.
Many contrastive decoding approaches mitigate hallucinations by performing multiple forward passes with different contexts~\cite{vcd,Wang2024instruction_contrastive, haltrapper}, significantly increasing inference latency.
We compare the inference efficiency of \methodName{} with vanilla decoding and representative baselines, including VCD, ICD, and HalTrapper.
As shown in Table~\ref{tab:efficiency}, contrastive methods like VCD, ICD, and HalTrapper incur more than twice the inference time per sample compared to vanilla decoding due to multiple contrastive branches.
In contrast, \methodName{} only applies a lightweight intervention during a single forward pass, reducing inference latency by approximately 50 \% compared to contrastive baselines, while remaining fully training-free.
This efficiency--effectiveness trade-off makes \methodName{} particularly suitable for practical deployment.

\subsection{Qualitative Case Study}

\label{sec:exp_case}

We present qualitative case studies from CHAIR to illustrate how \methodName{} mitigates hallucinations.
Figure~\ref{fig:case_study} shows representative examples comparing Visual Contrastive Decoding (VCD) and \methodName{}.
In these cases, VCD hallucinates objects that are not supported by the visual content, such as chairs or tableware, likely due to over-reliance on language priors associated with indoor scenes.
These hallucinations lead to reduced description faithfulness.
In contrast, \methodName{} avoids introducing unsupported object mentions.
By selectively regulating language priors based on visual--linguistic consistency and generation uncertainty, our method preserves informative descriptions while suppressing hallucinated content.
Importantly, the improvement does not stem from overly conservative generation: \methodName{} still provides detailed and coherent captions grounded in visual evidence.
These qualitative results align with the quantitative findings in Sections~\ref{sec:exp_chair} and~\ref{sec:exp_ablation}, indivating that \methodName{}  suppresses object hallucinations while maintaining descriptive quality.

\section{Conclusion}

We identify a common failure mode of training-free decoding for hallucination mitigation: indiscriminate linear prior suppression disrupts hidden representations, which we term \emph{Manifold Departure}. 
We propose Manifold-Guided Adaptive Projection (\methodName), which models a language-prior subspace and applies uncertainty-aware selective projection at decoding time. \footnote{Our code and models are publicly available at: \url{https://github.com/ZJU-OmniAI/MGAP}}
Experiments prove improved trade-offs between hallucination reduction and descriptive fidelity.

\section*{Impact Statement}
This paper advances multimodal language models by improving hallucination mitigation. We anticipate positive impacts in applications like visual question answering, with continued attention to fairness and robustness. 

\section*{Acknowledgements}
This work is supported by the Zhejiang Pioneer Project under Grant No. 2025C06SA201986 and the Zhejiang Key Laboratory Project (2024E10001).

\bibliography{example_paper}

@misc{wang2024mllmseedynamiccorrection,
      title={MLLM can see? Dynamic Correction Decoding for Hallucination Mitigation}, 
      author={Chenxi Wang and Xiang Chen and Ningyu Zhang and Bozhong Tian and Haoming Xu and Shumin Deng and Huajun Chen},
      year={2024},
      eprint={2410.11779},
      archivePrefix={arXiv},
      primaryClass={cs.CL},
      url={https://arxiv.org/abs/2410.11779}, 
}

@inproceedings{su-etal-2025-activation,
    title = "Activation Steering Decoding: Mitigating Hallucination in Large Vision-Language Models through Bidirectional Hidden State Intervention",
    author = "Su, Jingran  and
      Chen, Jingfan  and
      Li, Hongxin  and
      Chen, Yuntao  and
      Qing, Li  and
      Zhang, Zhaoxiang",
    editor = "Che, Wanxiang  and
      Nabende, Joyce  and
      Shutova, Ekaterina  and
      Pilehvar, Mohammad Taher",
    booktitle = "Proceedings of the 63rd Annual Meeting of the Association for Computational Linguistics (Volume 1: Long Papers)",
    month = jul,
    year = "2025",
    address = "Vienna, Austria",
    publisher = "Association for Computational Linguistics",
    url = "https://aclanthology.org/2025.acl-long.634/",
    doi = "10.18653/v1/2025.acl-long.634",
    pages = "12964--12974",
    ISBN = "979-8-89176-251-0",
}

@inproceedings{pope,
  title     = {Evaluating Object Hallucination in Large Vision-Language Models},
  author    = {Li, Yifan and Du, Yifan and Zhou, Kun and Wang, Jinpeng and Zhao, Wayne Xin and Wen, Ji-Rong},
  booktitle = {Proceedings of the 2023 Conference on Empirical Methods in Natural Language Processing (EMNLP)},
  year      = {2023},
  publisher = {Association for Computational Linguistics},
  url       = {https://aclanthology.org/2023.emnlp-main.155/}
}

@inproceedings{m3id,
  title     = {Multi-Modal Hallucination Control by Visual Information Grounding},
  author    = {Favero, Alessandro and Zancato, Luca and Trager, Matthew and Choudhary, Siddharth and Perera, Pramuditha and Achille, Alessandro and Swaminathan, Ashwin and Soatto, Stefano},
  booktitle = {Proceedings of the IEEE/CVF Conference on Computer Vision and Pattern Recognition (CVPR)},
  year      = {2024}
}

@inproceedings{Raghu2017SVCCA,
  title     = {SVCCA: Singular Vector Canonical Correlation Analysis for Deep Learning Dynamics and Interpretability},
  author    = {Raghu, Maithra and Gilmer, Justin and Yosinski, Jason and Sohl-Dickstein, Jascha},
  booktitle = {Advances in Neural Information Processing Systems (NeurIPS)},
  year      = {2017}
}

@inproceedings{Kornblith2019CKA,
  title     = {Similarity of Neural Network Representations Revisited},
  author    = {Kornblith, Simon and Norouzi, Mohammad and Lee, Honglak and Hinton, Geoffrey},
  booktitle = {Proceedings of the International Conference on Machine Learning (ICML)},
  year      = {2019}
}

@inproceedings{Morcos2018insights,
  title     = {Insights on Representational Similarity in Neural Networks with Canonical Correlation},
  author    = {Morcos, Ari S. and Raghu, Maithra and Bengio, Samy},
  booktitle = {Advances in Neural Information Processing Systems (NeurIPS)},
  year      = {2018}
}

@inproceedings{Li2018measuring,
  title     = {Measuring the Intrinsic Dimension of Objective Landscapes},
  author    = {Li, Chunyuan and Farkhoor, Heerad and Liu, Rosanne and Yosinski, Jason},
  booktitle = {International Conference on Learning Representations (ICLR)},
  year      = {2018}
}

@inproceedings{haltrapper,
  title        = {Why LVLMs Are More Prone to Hallucinations in Longer Responses: The Role of Context},
  author       = {Zheng, Ge and Qian, Jiaye and Tang, Jiajin and Yang, Sibei},
  booktitle    = {International Conference on Computer Vision (ICCV)},
  year         = {2025},

}

@inproceedings{CODE,
  title        = {CODE: Contrasting Self-generated Description to Combat Hallucination in Large Multi-modal Models},
  author       = {Kim, Junho and Kim, Hyunjun and Kim, Yeonju and Ro, Yong Man},
  booktitle    = {Advances in Neural Information Processing Systems (NeurIPS)},
  year         = {2024},
  note         = {arXiv:2406.01920},
  url          = {https://arxiv.org/abs/2406.01920}
}

@inproceedings{mmbench,
  title={Mmbench: Is your multi-modal model an all-around player?},
  author={Liu, Yuan and Duan, Haodong and Zhang, Yuanhan and Li, Bo and Zhang, Songyang and Zhao, Wangbo and Yuan, Yike and Wang, Jiaqi and He, Conghui and Liu, Ziwei and others},
  booktitle={European conference on computer vision},
  pages={216--233},
  year={2024},
  organization={Springer}
}

@article{hallucination_survey,
  title        = {Hallucination of Multimodal Large Language Models: A Survey},
  author       = {Bai, Zechen and Wang, Pichao and Xiao, Tianjun and He, Tong and Han, Zongbo and Zhang, Zheng and Shou, Mike Zheng},
  journal      = {arXiv preprint arXiv:2404.18930},
  year         = {2024}
}

@inproceedings{Li2023eval_hallucination,
  title     = {Evaluating Object Hallucination in Large Vision-Language Models},
  author    = {Li, Yifan and Du, Yikang and Zhou, Kun and Wang, Jin and Zhao, Wayne Xin and Wen, Ji-Rong},
  booktitle = {Proceedings of the 2023 Conference on Empirical Methods in Natural Language Processing},
  year      = {2023},
  publisher = {Association for Computational Linguistics},
  url       = {https://aclanthology.org/2023.emnlp-main.155/}
}

@inproceedings{Guerreiro2022needle,
  title     = {Looking for a Needle in a Haystack: A Comprehensive Study of Hallucinations in Neural Machine Translation},
  author    = {Guerreiro, Nuno M. and Voita, Elena and Martins, Andr{\'e} F. T.},
  booktitle = {Proceedings of the 17th Conference of the European Chapter of the Association for Computational Linguistics},
  year      = {2023},
  publisher = {Association for Computational Linguistics},
  note      = {arXiv:2208.05309}
}

@inproceedings{Liu2023robust_it,
  title        = {Mitigating Hallucination in Large Multi-Modal Models via Robust Instruction Tuning},
  author       = {Liu, Fuxiao and Lin, Kevin and Li, Lei and Wang, Jianfeng and Yacoob, Yaser and Wang, Lijuan},
  booktitle    = {International Conference on Learning Representations (ICLR)},
  year         = {2024}
}

@article{Sun2023rlhf_fact,
  title        = {Aligning Large Multimodal Models with Factually Augmented {RLHF}},
  author       = {Sun, Zhiqing and Shen, Sheng and Cao, Shuo and Liu, Hong and Li, Can and Shen, Yilin and Gan, Chuang and Gui, Liang-Yu and Wang, Ya-Xiong and Yang, Yi and others},
  journal      = {arXiv preprint arXiv:2309.14525},
  year         = {2023}
}

@misc{Zhao2023dpo_hallucination,
      title={Beyond Hallucinations: Enhancing LVLMs through Hallucination-Aware Direct Preference Optimization}, 
      author={Zhiyuan Zhao and Bin Wang and Linke Ouyang and Xiaoyi Dong and Jiaqi Wang and Conghui He},
      year={2023},
      eprint={2311.16839},
      archivePrefix={arXiv},
      primaryClass={cs.CV}
}

@inproceedings{Jiang2024hallucination_contrastive,
  title        = {Hallucination Augmented Contrastive Learning for Multimodal Large Language Model},
  author       = {Jiang, Chao and Xu, Haoyang and Dong, Ming and Chen, Jiaqi and Ye, Wen and Yan, Ming and Ye, Qing and Zhang, Jie and Huang, Fei and Zhang, Shikun},
  booktitle    = {Proceedings of the IEEE/CVF Conference on Computer Vision and Pattern Recognition (CVPR)},
  year         = {2024}
}

@inproceedings{vcd,
  title        = {Mitigating Object Hallucinations in Large Vision-Language Models through Visual Contrastive Decoding},
  author       = {Leng, Sicong and Zhang, Haoyang and Chen, Guoxuan and Li, Xiaoxiao and Lu, Sheng and Miao, Chunyan and Bing, Lidong},
  booktitle    = {Proceedings of the IEEE/CVF Conference on Computer Vision and Pattern Recognition (CVPR)},
  year         = {2024}
}

@article{Lee2024delve_vcd,
  title        = {Delve into Visual Contrastive Decoding for Hallucination Mitigation of Large Vision-Language Models},
  author       = {Lee, Yen-Lun and Tsai, Yi-Hsuan and Chiu, Wei-Chen},
  journal      = {arXiv preprint arXiv:2412.06775},
  year         = {2024}
}

@inproceedings{opera,
  title        = {{OPERA}: Alleviating Hallucination in Multi-Modal Large Language Models via Over-Trust Penalty and Retrospection-Allocation},
  author       = {Huang, Qidong and Dong, Xiaoyi and Zhang, Pan and Wang, Bin and He, Conghui and Wang, Jiaqi and Lin, Dahua and Zhang, Weiming and Yu, Nenghai},
  booktitle    = {Proceedings of the IEEE/CVF Conference on Computer Vision and Pattern Recognition (CVPR)},
  year         = {2024}
}

@article{Chen2024HALC,
  title={HALC: Object Hallucination Reduction via Adaptive Focal-Contrast Decoding},
  author={Chen, Zhaorun and Zhao, Zhuokai and Luo, Hongyin and Yao, Huaxiu and Li, Bo and Zhou, Jiawei},
  journal={arXiv preprint arXiv:2403.00425},
  year={2024}
}

@article{Huo2025self_introspective,
  title        = {Self-Introspective Decoding: Alleviating Hallucinations for Large Vision-Language Models},
  author       = {Huo, Fusheng and Xu, Wenbo and Zhang, Zhiyu and Wang, Haotian and Chen, Zhe and Zhao, Peng},
  journal      = {arXiv preprint arXiv:2408.02032},
  year         = {2025}
}

@inproceedings{Cho2025attention_ensemble,
  title        = {Do You Keep an Eye on What I Ask? Mitigating Multimodal Hallucination via Attention-Guided Ensemble Decoding},
  author       = {Cho, Y. and Kim, K. and Hwang, T. and Cho, S.},
  booktitle    = {International Conference on Learning Representations (ICLR)},
  year         = {2025}
}

@inproceedings{Park2025convis,
  title        = {{ConVis}: Contrastive Decoding with Hallucination Visualization for Mitigating Hallucinations in Multimodal Large Language Models},
  author       = {Park, Young and Lee, Dayeon and Choe, Jihye and Chang, Byung-Ok},
  booktitle    = {Proceedings of the AAAI Conference on Artificial Intelligence (AAAI)},
  year         = {2025}
}

@article{Dosovitskiy2020vit,
  title        = {An Image is Worth 16x16 Words: Transformers for Image Recognition at Scale},
  author       = {Dosovitskiy, Alexey and Beyer, Lucas and Kolesnikov, Alexander and Weissenborn, Dirk and Zhai, Xiaohua and Unterthiner, Thomas and Dehghani, Mostafa and Minderer, Matthias and Heigold, Georg and Gelly, Sylvain and Uszkoreit, Jakob and Houlsby, Neil},
  journal      = {arXiv preprint arXiv:2010.11929},
  year         = {2020}
}

@inproceedings{He2022mae,
  title        = {Masked Autoencoders Are Scalable Vision Learners},
  author       = {He, Kaiming and Chen, Xinlei and Xie, Saining and Li, Yanghao and Doll{\'a}r, Piotr and Girshick, Ross},
  booktitle    = {Proceedings of the IEEE/CVF Conference on Computer Vision and Pattern Recognition (CVPR)},
  year         = {2022}
}

@inproceedings{Beyer2023flexivit,
  title        = {{FlexiViT}: One Model for All Patch Sizes},
  author       = {Beyer, Lucas and Izmailov, Pavel and Kolesnikov, Alexander and Caron, Mathilde and Kornblith, Simon and Zhai, Xiaohua and Minderer, Matthias and Tschannen, Michael and Abdulmohsin, Ibrahim and Pavetic, Felix},
  booktitle    = {Proceedings of the IEEE/CVF Conference on Computer Vision and Pattern Recognition (CVPR)},
  year         = {2023}
}

@inproceedings{Rohrbach2018chair,
  title     = {Object Hallucination in Image Captioning},
  author    = {Rohrbach, Anna and Hendricks, Lisa Anne and Burns, Kaylee and Darrell, Trevor and Saenko, Kate},
  booktitle = {Proceedings of the 2018 Conference on Empirical Methods in Natural Language Processing},
  year      = {2018}
}

@inproceedings{Li2022contrastive_decoding,
  title        = {Contrastive Decoding: Open-Ended Text Generation as Optimization},
  author       = {Li, Xiang Lisa and Holtzman, Ari and Fried, Daniel and Liang, Percy and Eisner, Jason and Hashimoto, Tatsunori and Zettlemoyer, Luke and Lewis, Mike},
  booktitle    = {Proceedings of the 61st Annual Meeting of the Association for Computational Linguistics (Volume 1: Long Papers)},
  year         = {2023},
  publisher    = {Association for Computational Linguistics},
  note         = {arXiv:2210.15097}
}

@inproceedings{Tong2024eyes,
  title     = {Eyes Wide Shut? Exploring the Visual Shortcomings of Multimodal {LLM}s},
  author    = {Tong, Shengbang and Liu, Zijian and Zhai, Yuhuai and Ma, Yang and LeCun, Yann and Xie, Saining},
  booktitle = {Proceedings of the IEEE/CVF Conference on Computer Vision and Pattern Recognition (CVPR)},
  year      = {2024},
  note      = {arXiv:2401.06209}
}

@inproceedings{Wang2024instruction_contrastive,
  title     = {Mitigating Hallucinations in Large Vision-Language Models with Instruction Contrastive Decoding},
  author    = {Wang, Xudong and Pan, Junjie and Ding, Lin and Biemann, Chris},
  booktitle = {Findings of the Association for Computational Linguistics (Findings of ACL)},
  year      = {2024},
  note      = {arXiv:2403.18715}
}

@article{qwen_vl,
  title={Qwen2-VL: Enhancing Vision-Language Model's Perception of the World at Any Resolution},
  author={Wang, Peng and Bai, Shuai and Tan, Sinan and Wang, Shijie and Fan, Zhihao and Bai, Jinze and Chen, Keqin and Liu, Xuejing and Wang, Jialin and Ge, Wenbin and Fan, Yang and Dang, Kai and Du, Mengfei and Ren, Xuancheng and Men, Rui and Liu, Dayiheng and Zhou, Chang and Zhou, Jingren and Lin, Junyang},
  journal={arXiv preprint arXiv:2409.12191},
  year={2024}
}

@article{internvl,
  title        = {Expanding Performance Boundaries of Open-Source Multimodal Models with Model, Data, and Test-Time Scaling},
  author       = {Chen, Zhe and Wang, Weiyun and Cao, Yue and Liu, Yangzhou and Gao, Zhangwei and Cui, Erfei and Zhu, Jinguo and Ye, Shenglong and Tian, Hao and Liu, Zhaoyang and Gu, Lixin and Wang, Xuehui and Li, Qingyun and Ren, Yiming and Chen, Zixuan and Luo, Jiapeng and Wang, Jiahao and Jiang, Tan and Wang, Bo and He, Conghui and Shi, Botian and Zhang, Xingcheng and Lv, Han and Wang, Yi and Shao, Wenqi and Chu, Pei and Tu, Zhongying and He, Tong and Wu, Zhiyong and Deng, Huipeng and Ge, Jiaye and Chen, Kai and Zhang, Kaipeng and Wang, Limin and Dou, Min and Lu, Lewei and Zhu, Xizhou and Lu, Tong and Lin, Dahua and Qiao, Yu and Dai, Jifeng and Wang, Wenhai},
  journal      = {arXiv preprint arXiv:2412.05271},
  year         = {2024}
}

@article{molmo,
  title={Molmo and PixMo: Open Weights and Open Data for State-of-the-Art Multimodal Models},
  author={Matt Deitke and Christopher Clark and Sangho Lee and Rohun Tripathi and Yue Yang and Jae Sung Park and Mohammadreza Salehi and Niklas Muennighoff and Kyle Lo and Luca Soldaini and Jiasen Lu and Taira Anderson and Erin Bransom and Kiana Ehsani and Huong Ngo and YenSung Chen and Ajay Patel and Mark Yatskar and Chris Callison-Burch and Andrew Head and Rose Hendrix and Favyen Bastani and Eli VanderBilt and Nathan Lambert and Yvonne Chou and Arnavi Chheda and Jenna Sparks and Sam Skjonsberg and Michael Schmitz and Aaron Sarnat and Byron Bischoff and Pete Walsh and Chris Newell and Piper Wolters and Tanmay Gupta and Kuo-Hao Zeng and Jon Borchardt and Dirk Groeneveld and Jen Dumas and Crystal Nam and Sophie Lebrecht and Caitlin Wittlif and Carissa Schoenick and Oscar Michel and Ranjay Krishna and Luca Weihs and Noah A. Smith and Hannaneh Hajishirzi and Ross Girshick and Ali Farhadi and Aniruddha Kembhavi},
  journal={arXiv preprint arXiv:2409.17146},
  year={2024}
}

@inproceedings{mmmu,
  title={MMMU: A Massive Multi-discipline Multimodal Understanding and Reasoning Benchmark for Expert AGI},
  author={Xiang Yue and Yuansheng Ni and Kai Zhang and Tianyu Zheng and Ruoqi Liu and Ge Zhang and Samuel Stevens and Dongfu Jiang and Weiming Ren and Yuxuan Sun and Cong Wei and Botao Yu and Ruibin Yuan and Renliang Sun and Ming Yin and Boyuan Zheng and Zhenzhu Yang and Yibo Liu and Wenhao Huang and Huan Sun and Yu Su and Wenhu Chen},
  booktitle={Proceedings of CVPR},
  year={2024},
}

@article{video_mmmu,
    title={Video-MMMU: Evaluating Knowledge Acquisition from Multi-Discipline Professional Videos},
    author={Kairui Hu and Penghao Wu and Fanyi Pu and Wang Xiao and Yuanhan Zhang and Xiang Yue and Bo Li and Ziwei Liu},
    journal={arXiv preprint arXiv:2501.13826},
    year={2025},
    url={https://arxiv.org/abs/2501.13826}
}

@article{Bengio2013rep_learning,
  title        = {Representation Learning: A Review and New Perspectives},
  author       = {Bengio, Yoshua and Courville, Aaron and Vincent, Pascal},
  journal      = {IEEE Transactions on Pattern Analysis and Machine Intelligence},
  volume       = {35},
  number       = {8},
  pages        = {1798--1828},
  year         = {2013},
  doi          = {10.1109/TPAMI.2013.50}
}

@inproceedings{Verma2019manifold_mixup,
  title        = {Manifold Mixup: Better Representations by Interpolating Hidden States},
  author       = {Verma, Vikas and Lamb, Alex and Beckham, Christopher and Najafi, Amir and Mitliagkas, Ioannis and Lopez-Paz, David and Bengio, Yoshua},
  booktitle    = {Proceedings of the 36th International Conference on Machine Learning (ICML)},
  series       = {Proceedings of Machine Learning Research},
  volume       = {97},
  pages        = {6438--6447},
  year         = {2019}
}

@inproceedings{Stutz2018disentangling,
  title        = {Disentangling Adversarial Robustness and Generalization},
  author       = {Stutz, David and Hein, Matthias and Schiele, Bernt},
  booktitle    = {Proceedings of the IEEE/CVF Conference on Computer Vision and Pattern Recognition (CVPR)},
  year         = {2019},
  note         = {arXiv:1812.00740}
}

@article{Khoury2018geometry_adv,
  title        = {On the Geometry of Adversarial Examples},
  author       = {Khoury, Marc and Hadfield-Menell, Dylan},
  journal      = {arXiv preprint arXiv:1811.00525},
  year         = {2018}
}

@article{roweis2000lle,
  title={Nonlinear Dimensionality Reduction by Locally Linear Embedding},
  author={Roweis, Sam T. and Saul, Lawrence K.},
  journal={Science},
  volume={290},
  number={5500},
  pages={2323--2326},
  year={2000}
}

@article{tenenbaum2000isomap,
  title={A Global Geometric Framework for Nonlinear Dimensionality Reduction},
  author={Tenenbaum, Joshua B. and de Silva, Vin and Langford, John C.},
  journal={Science},
  volume={290},
  number={5500},
  pages={2319--2323},
  year={2000}
}

@article{bengio2013repr,
  title={Representation Learning: A Review and New Perspectives},
  author={Bengio, Yoshua and Courville, Aaron and Vincent, Pascal},
  journal={IEEE Transactions on Pattern Analysis and Machine Intelligence},
  volume={35},
  number={8},
  pages={1798--1828},
  year={2013}
}
\bibliographystyle{icml2026}

\newpage
\appendix
\onecolumn

\section{Hyper-parameter Sensitivity Analysis}
\label{sec:appendix_sensitivity}

In this section, we investigate the sensitivity of our proposed MGAP to the hyper-parameters $K$ and $N$. The experiments are conducted on the LLaVA v1.5 7B backbone using the POPE benchmark. 
Table \ref{tab:sensitivity_pope} reports the Accuracy, Precision, and F1 scores across the Random, Popular, and Adversarial evaluation splits under different settings of $K \in \{1, 3, 5, 10\}$ and $N \in \{10, 50, 100, 200\}$.

Remarkably, as shown in Table \ref{tab:sensitivity_pope}, MGAP demonstrates exceptional robustness to hyper-parameter variations. The performance metrics remain highly stable across all tested values of $K$ and $N$, indicating that our method does not rely on meticulous parameter tuning to achieve superior hallucination mitigation.
Specifically, increasing $K$ from 1 to 5 brings slight improvements, particularly on the more challenging Adversarial split, while further increasing $K$ to 10 yields no additional gains.
Similarly, regarding the parameter $N$, the performance reaches its peak around $N=50$. Expanding $N$ beyond 50 does not significantly improve the generation quality but inevitably introduces additional computational overhead. 
Therefore, to achieve an optimal balance between anti-hallucination performance and computational efficiency, we adopt $K=5$ and $N=50$ as the default configuration for all main experiments.

\begin{table}[htbp]
\centering
\caption{Sensitivity analysis of hyper-parameters $K$ and $N$ on the POPE benchmark using LLaVA v1.5 7B. Metrics are reported across Random (R), Popular (P), and Adversarial (A) splits in the format of R / P / A. The best result for each individual split within the $K$ and $N$ blocks is marked in \textbf{bold}.}
\label{tab:sensitivity_pope}
\setlength{\tabcolsep}{15pt}
\renewcommand{\arraystretch}{1.0} 
\begin{tabular}{ll ccc}
\toprule
\textbf{Parameter} & \textbf{Value} & \textbf{Acc (R/P/A)} & \textbf{Prec (R/P/A)} & \textbf{F1 (R/P/A)} \\
\midrule
\multirow{4}{*}{$K$}
& 1 & 90.60 / 87.87 / 82.70 & 92.89 / 87.82 / 79.64 & 89.34 / \textbf{87.87} / 84.35 \\
& 3 & 90.53 / \textbf{88.80} / 84.10 & \textbf{93.87} / \textbf{91.62} / \textbf{85.49} & 90.16 / 87.73 / 83.69 \\
& 5 (Default) & \textbf{90.63} / 88.10 / \textbf{84.59} & 93.69 / 91.50 / 85.06 & \textbf{90.29} / 87.59 / \textbf{84.46} \\
& 10 & \textbf{90.63} / 87.77 / 82.97 & 93.38 / 87.99 / 84.24 & 90.13 / 87.73 / 83.70 \\

\midrule
\multirow{4}{*}{$N$}
& 10 & 90.40 / 87.93 / 83.43 & 93.10 / 89.30 / 81.68 & 89.98 / \textbf{87.72} / 83.38 \\
& 50 (Default)& \textbf{90.63} / 88.10 / \textbf{84.59} & 93.69 / \textbf{91.50} / 85.06 & \textbf{90.29} / 87.59 / \textbf{84.46} \\
& 100 & 90.37 / 88.83 / 84.20 & \textbf{94.10} / 91.16 / \textbf{85.36} & 89.94 / 87.62 / 83.68 \\
& 200 & 90.33 / \textbf{88.95} / 84.17 & 94.03 / 91.27 / 85.31 & 89.91 / 87.56 / 83.65 \\
\bottomrule
\end{tabular}%

\end{table}

\section{Theoretical Properties of MGAP}
\label{app:theory}

This appendix provides several theoretical properties of MGAP that motivate its
\emph{geometry-aware} and \emph{stable} behavior during decoding. 
We emphasize that MGAP performs a \emph{low-rank} intervention confined to a learned
language-prior subspace, and the overall correction magnitude is provably bounded.
These results help explain why MGAP avoids the ``global translation'' effect
that may cause manifold departure in linear suppression methods.

\subsection{Notation and Setup}
Let $V\in\mathbb{R}^{d\times K}$ denote the language prior basis with orthonormal columns, i.e., $V^\top V = I$.
Define the orthogonal projection onto the prior subspace as
\begin{equation}
P := VV^\top.
\end{equation}
At decoding step $t$, MGAP forms the prior projection
\begin{equation}
h_{\mathrm{proj}} := Ph_{\mathrm{orig}},
\end{equation}
and applies the adaptive correction
\begin{equation}
h_{\mathrm{cut}} := h_{\mathrm{orig}} - \beta\, h_{\mathrm{proj}},
\quad\text{where}\quad
\beta := \alpha\cdot g\cdot p.
\end{equation}
Here $g$ is the uncertainty gate (computed from entropy) and $p$ is the consistency-based protection factor:
\begin{equation}
g := \tanh(\lambda H), 
\qquad
p := 1 - \cos(h_{\mathrm{orig}}, h_{\mathrm{proj}}).
\end{equation}
For brevity, we analyze one decoding step and omit the time index.

\subsection{Property I: MGAP Moves Toward a Ground-truth-Aligned State Under Prior-Aligned Error}

\paragraph{Intuition}
The previous property ensures MGAP does not over-correct, but it does not explain \emph{when} the correction is beneficial.
This property states a sufficient condition under which subtracting the prior projection reduces the distance to a ground-truth-aligned reference state.
Concretely, if the current error is positively aligned with the prior component, then removing (part of) that component yields a strict improvement.

\begin{theorem}[Ground-truth-directed correction under prior-aligned error]
\label{thm:prov_toward_gt}
Let $h_{\mathrm{gt}}$ be a reference hidden state aligned with the ground-truth continuation
(e.g., obtained by teacher forcing), and define the error $e := h_{\mathrm{orig}} - h_{\mathrm{gt}}$.
Let $h_{\mathrm{proj}} = Ph_{\mathrm{orig}}$ and $h_{\mathrm{cut}} = h_{\mathrm{orig}} - \beta h_{\mathrm{proj}}$ with $\beta\ge 0$.
If $\langle e,\, h_{\mathrm{proj}}\rangle > 0$, then for any
\[
0 < \beta < \frac{2\langle e,h_{\mathrm{proj}}\rangle}{\|h_{\mathrm{proj}}\|^2},
\]
the update strictly decreases the squared distance to $h_{\mathrm{gt}}$:
\[
\|h_{\mathrm{cut}}-h_{\mathrm{gt}}\|^2 < \|h_{\mathrm{orig}}-h_{\mathrm{gt}}\|^2.
\]
Moreover, the optimal step size (minimizing $\|h_{\mathrm{cut}}-h_{\mathrm{gt}}\|^2$ over $\beta\ge 0$) is
\[
\beta^\star = \frac{\langle e,h_{\mathrm{proj}}\rangle}{\|h_{\mathrm{proj}}\|^2}.
\]
\end{theorem}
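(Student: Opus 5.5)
The argument expands the squared distance as a quadratic in $\beta$ and reads off both claims from its shape. Write $u := h_{\mathrm{proj}}$, so $h_{\mathrm{cut}} - h_{\mathrm{gt}} = (h_{\mathrm{orig}} - h_{\mathrm{gt}}) - \beta u = e - \beta u$. Expanding the inner product gives
\begin{equation*}
f(\beta) := \|h_{\mathrm{cut}}-h_{\mathrm{gt}}\|^2 = \|e\|^2 - 2\beta\langle e,u\rangle + \beta^2\|u\|^2 .
\end{equation*}
Hence the change relative to $\beta=0$ factors as
\begin{equation*}
f(\beta)-f(0) = \beta\bigl(\beta\|u\|^2 - 2\langle e,u\rangle\bigr).
\end{equation*}

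First I need $u \neq 0$. The hypothesis $\langle e,u\rangle>0$ rules out $u=0$, so $\|u\|^2>0$ and the stated interval for $\beta$ is well defined and nonempty. For $0<\beta<2\langle e,u\rangle/\|u\|^2$, the first factor $\beta$ is positive. The second factor is negative, since $\beta\|u\|^2 < 2\langle e,u\rangle$. Therefore $f(\beta)<f(0)=\|h_{\mathrm{orig}}-h_{\mathrm{gt}}\|^2$, which is the strict decrease.

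For the optimal step, $f$ is a strictly convex quadratic in $\beta$ because its leading coefficient $\|u\|^2$ is positive. Setting $f'(\beta) = -2\langle e,u\rangle + 2\beta\|u\|^2 = 0$ gives the unconstrained minimizer $\beta^\star=\langle e,u\rangle/\|u\|^2$. This is positive by hypothesis, so it lies in the feasible set $\beta\ge 0$ and is also the constrained minimizer. It is moreover the midpoint of the interval found above, which gives a consistency check.

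There is no real obstacle. The only points needing care are noting that $u\neq 0$ follows from the hypothesis, and that convexity justifies the claim that $\beta^\star$ is the minimizer over $\beta\ge 0$ rather than merely a critical point. None of the structure of $P$ (idempotence or orthogonality) is needed here; it matters only for the subspace-selectivity result, Theorem~\ref{thm:subspace_selective}.
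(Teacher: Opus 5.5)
Your proposal is correct and follows the paper's proof essentially verbatim. Like the paper, you expand $\|e-\beta h_{\mathrm{proj}}\|^2$ as a quadratic in $\beta$, read off the strict-decrease interval from the sign of $-2\beta\langle e,h_{\mathrm{proj}}\rangle+\beta^2\|h_{\mathrm{proj}}\|^2$, and find $\beta^\star$ by setting the derivative to zero. Your extra remarks make explicit two details the paper leaves implicit: that $\langle e,h_{\mathrm{proj}}\rangle>0$ forces $h_{\mathrm{proj}}\neq 0$, and that strict convexity together with $\beta^\star>0$ makes the critical point the minimizer over $\beta\ge 0$.
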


\begin{proof}
We expand the squared distance:
\[
\|h_{\mathrm{cut}}-h_{\mathrm{gt}}\|^2
=
\|h_{\mathrm{orig}}-\beta h_{\mathrm{proj}}-h_{\mathrm{gt}}\|^2
=
\|e-\beta h_{\mathrm{proj}}\|^2
=
\|e\|^2
-2\beta\langle e,h_{\mathrm{proj}}\rangle
+\beta^2\|h_{\mathrm{proj}}\|^2.
\]
If $\langle e,h_{\mathrm{proj}}\rangle>0$, the quadratic is strictly smaller than $\|e\|^2$ whenever
\[
-2\beta\langle e,h_{\mathrm{proj}}\rangle+\beta^2\|h_{\mathrm{proj}}\|^2 < 0
\quad\Leftrightarrow\quad
0<\beta<\frac{2\langle e,h_{\mathrm{proj}}\rangle}{\|h_{\mathrm{proj}}\|^2}.
\]
The minimizer over $\beta\ge 0$ follows by setting the derivative to zero, yielding
$\beta^\star=\langle e,h_{\mathrm{proj}}\rangle/\|h_{\mathrm{proj}}\|^2$.
\end{proof}

\subsection{Property II: MGAP Has a Bounded Step Size}

\paragraph{Intuition}
Decoding-time interventions can be brittle if the correction magnitude becomes too large.
MGAP prevents this by making the effective coefficient $\beta=\alpha\cdot\mathrm{Gate}\cdot\mathrm{Protection}$
automatically bounded, which in turn bounds the update step $\|h_{\mathrm{cut}}-h_{\mathrm{orig}}\|$.
This yields a conservative, non-explosive intervention in a single forward pass.

\begin{theorem}[Bounded coefficient and bounded step]
\label{thm:prov_bounded_step}
Assume $V^\top V = I$ and $h_{\mathrm{orig}}\neq 0$.
Let $P=VV^\top$, $h_{\mathrm{proj}} = Ph_{\mathrm{orig}}$, and define
\[
\mathrm{Gate}=\tanh(\lambda H),\quad
\mathrm{Protection}=1-\cos(h_{\mathrm{orig}},h_{\mathrm{proj}}),\quad
\beta=\alpha\cdot \mathrm{Gate}\cdot \mathrm{Protection}.
\]
Then
\[
0\le \mathrm{Gate}<1,\qquad
0\le \mathrm{Protection}\le 1,\qquad
0\le \beta<\alpha.
\]
Moreover, the MGAP update $h_{\mathrm{cut}}=h_{\mathrm{orig}}-\beta h_{\mathrm{proj}}$ satisfies
\[
\|h_{\mathrm{cut}}-h_{\mathrm{orig}}\|
=
\beta\|h_{\mathrm{proj}}\|
\le
\alpha\|h_{\mathrm{orig}}\|.
\]
\end{theorem}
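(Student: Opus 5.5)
The plan is to bound each factor separately, multiply the bounds, and finish with the fact that an orthogonal projection cannot increase norm. Throughout I take $\alpha>0$ and $\lambda\ge 0$ as standing assumptions; they are implicit in the paper. The strict inequality $\beta<\alpha$ cannot hold unless $\alpha>0$.

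\textbf{Gate.} The Shannon entropy satisfies $H\ge 0$, so $\lambda H\ge 0$. Since $\tanh$ maps $[0,\infty)$ into $[0,1)$, we get $0\le \mathrm{Gate}<1$.

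\textbf{Protection.} The key step is to compute the cosine in closed form using the properties of $P=VV^\top$. From $V^\top V=I$, the matrix $P$ is symmetric and idempotent, so
\[
\langle h_{\mathrm{orig}},Ph_{\mathrm{orig}}\rangle=\langle Ph_{\mathrm{orig}},Ph_{\mathrm{orig}}\rangle=\|h_{\mathrm{proj}}\|^2 .
\]
When $h_{\mathrm{proj}}\neq 0$ this gives
\[
\cos(h_{\mathrm{orig}},h_{\mathrm{proj}})=\frac{\|h_{\mathrm{proj}}\|}{\|h_{\mathrm{orig}}\|}.
\]
Pythagoras gives $\|h_{\mathrm{orig}}\|^2=\|Ph_{\mathrm{orig}}\|^2+\|(I-P)h_{\mathrm{orig}}\|^2$, so this ratio lies in $[0,1]$. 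Hence $\mathrm{Protection}\in[0,1]$.

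\textbf{Coefficient.} Multiplying the two bounds gives $0\le\beta=\alpha\,\mathrm{Gate}\,\mathrm{Protection}\le\alpha\,\mathrm{Gate}<\alpha$. The last inequality is strict because $\mathrm{Gate}<1$ and $\alpha>0$.

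\textbf{Step bound.} By definition, $h_{\mathrm{cut}}-h_{\mathrm{orig}}=-\beta h_{\mathrm{proj}}$, so $\|h_{\mathrm{cut}}-h_{\mathrm{orig}}\|=\beta\|h_{\mathrm{proj}}\|$ since $\beta\ge 0$. The Pythagorean identity above gives $\|h_{\mathrm{proj}}\|\le\|h_{\mathrm{orig}}\|$. Combined with $\beta<\alpha$, this yields the claimed bound $\beta\|h_{\mathrm{proj}}\|\le\alpha\|h_{\mathrm{orig}}\|$.

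The main obstacle is the degenerate case rather than any estimate. When $h_{\mathrm{proj}}=0$, the cosine is undefined. I would handle this by adopting any convention for the cosine that keeps $\mathrm{Protection}\in[0,1]$, for example $\cos:=0$. In that case the update is $h_{\mathrm{cut}}=h_{\mathrm{orig}}$ and all claimed inequalities hold trivially. The hypothesis $h_{\mathrm{orig}}\ne 0$ is exactly what is needed for the cosine formula to make sense otherwise.
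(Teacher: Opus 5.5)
Your proof is correct and follows the paper's argument essentially step for step. Like the paper, you use entropy nonnegativity and the range of $\tanh$ for the gate, the identity $\cos(h_{\mathrm{orig}},h_{\mathrm{proj}})=\|h_{\mathrm{proj}}\|/\|h_{\mathrm{orig}}\|$ from the orthogonal decomposition, and non-expansiveness of $P$ for the step bound; your explicit handling of the degenerate case $h_{\mathrm{proj}}=0$ and of the standing assumption $\alpha>0$ fills in details the paper's proof leaves implicit.
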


\begin{proof}
Since $H\ge 0$ and $\lambda>0$, we have $\tanh(\lambda H)\in[0,1)$, hence $0\le \mathrm{Gate}<1$.

Because $h_{\mathrm{proj}}=Ph_{\mathrm{orig}}$ is an orthogonal projection, we can write
$h_{\mathrm{orig}}=h_{\mathrm{proj}}+(I-P)h_{\mathrm{orig}}$ with $h_{\mathrm{proj}}\perp (I-P)h_{\mathrm{orig}}$.
Therefore,
\[
\cos(h_{\mathrm{orig}},h_{\mathrm{proj}})
=
\frac{\langle h_{\mathrm{orig}},h_{\mathrm{proj}}\rangle}{\|h_{\mathrm{orig}}\|\|h_{\mathrm{proj}}\|}
=
\frac{\|h_{\mathrm{proj}}\|^2}{\|h_{\mathrm{orig}}\|\|h_{\mathrm{proj}}\|}
=
\frac{\|h_{\mathrm{proj}}\|}{\|h_{\mathrm{orig}}\|}
\in[0,1],
\]
which implies $\mathrm{Protection}\in[0,1]$ and hence $0\le \beta<\alpha$.

Finally, $h_{\mathrm{cut}}-h_{\mathrm{orig}}=-\beta h_{\mathrm{proj}}$ so
$\|h_{\mathrm{cut}}-h_{\mathrm{orig}}\|=\beta\|h_{\mathrm{proj}}\|$.
Since $P$ is non-expansive, $\|h_{\mathrm{proj}}\|=\|Ph_{\mathrm{orig}}\|\le \|h_{\mathrm{orig}}\|$,
which yields $\|h_{\mathrm{cut}}-h_{\mathrm{orig}}\|\le \alpha\|h_{\mathrm{orig}}\|$.
\end{proof}

\subsection{Property III: MGAP Modifies Only the Prior Subspace}

\paragraph{Intuition}
A core design goal of MGAP is to avoid perturbing semantic directions that are \emph{not} attributable to language priors.
Since the correction is applied via the projection $h_{\mathrm{proj}}$ that lies entirely in the prior subspace,
the update should leave all components orthogonal to this subspace unchanged.
The following theorem formalizes this subspace-selective behavior.

\begin{theorem}[Subspace-selective update]
\label{thm:prov_subspace_selective}
Let $V_{\mathrm{prior}}\in\mathbb{R}^{d\times K}$ have orthonormal columns ($V_{\mathrm{prior}}^\top V_{\mathrm{prior}}=I_K$) and define
$P:=V_{\mathrm{prior}}V_{\mathrm{prior}}^\top\in\mathbb{R}^{d\times d}$.
Decompose $h_{\mathrm{orig}}$ into its subspace and orthogonal components:
\[
h_{\mathrm{orig}} = h_{\parallel}+h_{\perp},
\qquad
h_{\parallel}:=Ph_{\mathrm{orig}},\;\;
h_{\perp}:=(I_d-P)h_{\mathrm{orig}},
\]
where $I_d$ is the $d\times d$ identity matrix.
Then MGAP yields
\[
h_{\mathrm{cut}}=(1-\beta)\,h_{\parallel}+h_{\perp}.
\]
In particular, the orthogonal component is preserved:
\[
(I_d-P)h_{\mathrm{cut}}=(I_d-P)h_{\mathrm{orig}}.
\]
\end{theorem}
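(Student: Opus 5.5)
The plan is a direct linear-algebra computation using only two facts about $P=V_{\mathrm{prior}}V_{\mathrm{prior}}^\top$ that follow from $V_{\mathrm{prior}}^\top V_{\mathrm{prior}}=I_K$. The first is idempotence: $P^2=V_{\mathrm{prior}}(V_{\mathrm{prior}}^\top V_{\mathrm{prior}})V_{\mathrm{prior}}^\top=P$. The second is its consequence $(I_d-P)P=P-P^2=0$. I will verify these first, since every later step relies on them.

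Next, I substitute the decomposition into the MGAP update. By definition $h_{\mathrm{proj}}=Ph_{\mathrm{orig}}=h_\parallel$. Together with $h_{\mathrm{orig}}=h_\parallel+h_\perp$ (which holds trivially because $P+(I_d-P)=I_d$), this gives
\[
h_{\mathrm{cut}}=h_{\mathrm{orig}}-\beta h_{\mathrm{proj}}=h_\parallel+h_\perp-\beta h_\parallel=(1-\beta)h_\parallel+h_\perp ,
\]
which is the first claim. Here $\beta$ is treated as a scalar. It depends on $h_{\mathrm{orig}}$ through the gate and the protection factor, but once $h_{\mathrm{orig}}$ is fixed it is a constant, so the update is linear in the quantities that appear.

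For the preservation statement, I apply $I_d-P$ to both sides. Using $(I_d-P)h_\parallel=(I_d-P)Ph_{\mathrm{orig}}=0$ and $(I_d-P)h_\perp=(I_d-P)^2h_{\mathrm{orig}}=(I_d-P)h_{\mathrm{orig}}$ (again by idempotence), I get $(I_d-P)h_{\mathrm{cut}}=h_\perp=(I_d-P)h_{\mathrm{orig}}$. This is exactly the identity in \cref{thm:subspace_selective} written with $P$.

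No step is a genuine obstacle. The only point needing care is to state explicitly that orthonormality of the columns is what makes $P$ an orthogonal projector, since without it $P^2\neq P$ and the cross term would not vanish. It is also worth remarking that the result holds for any scalar $\beta$, not only for $\beta\in[0,\alpha)$ as in \cref{thm:prov_bounded_step}.
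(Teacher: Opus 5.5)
Your proposal is correct and follows the paper's proof essentially step for step. You substitute $h_{\mathrm{proj}}=Ph_{\mathrm{orig}}=h_\parallel$ into the update to obtain $(1-\beta)h_\parallel+h_\perp$, then apply $I_d-P$. The only difference is that you explicitly derive $(I_d-P)h_\parallel=0$ and $(I_d-P)h_\perp=h_\perp$ from $P^2=P$ (via $V_{\mathrm{prior}}^\top V_{\mathrm{prior}}=I_K$), which the paper uses without justification.
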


\noindent
\textbf{Equivalent form.}
Since $P=V_{\mathrm{prior}}V_{\mathrm{prior}}^\top$, we have
\[
(I_d-P)h = h-Ph = h - V_{\mathrm{prior}}V_{\mathrm{prior}}^\top h.
\]
Therefore,
\[
(I_d-P)h_{\mathrm{cut}}=(I_d-P)h_{\mathrm{orig}}
\]
is equivalent to
\begin{equation}
h_{\mathrm{cut}}-V_{\mathrm{prior}}V_{\mathrm{prior}}^\top h_{\mathrm{cut}}
=
h_{\mathrm{orig}}-V_{\mathrm{prior}}V_{\mathrm{prior}}^\top h_{\mathrm{orig}}.
\end{equation}

\begin{proof}
By definition of projection, $h_{\mathrm{proj}} = Ph_{\mathrm{orig}} = h_{\parallel}$.
MGAP applies $h_{\mathrm{cut}} = h_{\mathrm{orig}} - \beta h_{\mathrm{proj}}$.
Substituting $h_{\mathrm{orig}} = h_{\parallel} + h_{\perp}$ and $h_{\mathrm{proj}} = h_{\parallel}$ gives
\[
h_{\mathrm{cut}} = (h_{\parallel}+h_{\perp}) - \beta h_{\parallel}
= (1-\beta)h_{\parallel} + h_{\perp}.
\]
Applying $(I_d-P)$ and using $(I_d-P)h_{\parallel}=0$ and $(I_d-P)h_{\perp}=h_{\perp}$ yields
$(I_d-P)h_{\mathrm{cut}} = h_{\perp} = (I_d-P)h_{\mathrm{orig}}$.
\end{proof}

\section{MGAP Decoding Algorithm}
\label{app:mgap_algo}
Algorithm~\ref{alg:mgap} summarizes the decoding-time procedure of MGAP.
Given an image $v$, a query $x$, and the precomputed prior basis $V_{\mathrm{prior}}$ (Section~\ref{sec:prior_subspace}),
MGAP computes the prior projection $h_{\mathrm{proj}}$, estimates mismatch $\delta$ and uncertainty $\gamma$,
and applies the adaptive correction to obtain $h_{\mathrm{cut}}$ for token generation.

\begin{algorithm}[htbp] \caption{MGAP Decoding} \label{alg:mgap} \begin{algorithmic}[1] \REQUIRE Image ${v}$, query $x$, prior basis $ {V}_{\mathrm{prior}}$ \ENSURE Generated response $y=(y_1,\dots,y_T)$ \STATE Initialize $y_0 \gets \langle \mathrm{BOS}\rangle$ \FOR{$t=1$ to $T$} \STATE Compute hidden state $h_{\mathrm{orig}}$ \STATE $h_{\mathrm{proj}} \gets \mathbf{V}_{\mathrm{prior}}\mathbf{V}_{\mathrm{prior}}^{\top}h_{\mathrm{orig}}$ \STATE $\delta \gets 1-\cos(h_{\mathrm{orig}},h_{\mathrm{proj}})$ \STATE $p(\cdot)\gets \mathrm{Softmax}(\mathcal{F}(h_{\mathrm{orig}}))$ \STATE $H\gets -\sum_{y}p(y)\log p(y)$ \STATE $\gamma\gets \tanh(\lambda H)$ \STATE $h_{\mathrm{cut}}\gets h_{\mathrm{orig}}-\alpha\cdot\gamma\cdot\delta\cdot h_{\mathrm{proj}}$ \STATE Generate $y_t \sim \mathrm{Softmax}(\mathcal{F}(h_{\mathrm{cut}}))$ \ENDFOR \STATE \textbf{return} $y$ \end{algorithmic} \end{algorithm}

\section{Hyper-parameter Sensitivity}
\label{app:hparam}

We study the sensitivity of MGAP to two key hyper-parameters: the projection scale $\alpha$ and the uncertainty-gating temperature $\lambda$ (LLaVA v1.5-7B on POPE).
Overall, MGAP is stable across a broad range of settings and exhibits a clear monotonic trend on the most challenging split.

\textbf{Effect of $\alpha$}
Increasing $\alpha$ consistently improves \emph{Precision} on all splits, and yields monotonic gains on the Adversarial split (F1: 82.05 $\rightarrow$ 84.00 when $\alpha$ increases from 0 to 1.0).
On Random/Popular, moderate $\alpha$ (around 0.5--0.75) gives slightly higher F1, while larger $\alpha$ further boosts Precision with a small F1 trade-off, indicating a typical precision--recall shift.

\begin{table}[htbp]
\centering
\caption{Sensitivity to projection scale $\alpha$ on POPE (LLaVA v1.5-7B).}
\label{tab:alpha_sens}
\renewcommand{\arraystretch}{1.0}
\setlength{\tabcolsep}{13.4pt}

\begin{tabular}{cccccccccc}
\toprule
\multirow{2}{*}{$\alpha$}
& \multicolumn{3}{c}{Random} & \multicolumn{3}{c}{Popular} & \multicolumn{3}{c}{Adversarial}\\
\cmidrule(lr){2-4}\cmidrule(lr){5-7}\cmidrule(lr){8-10}
& Acc & Prec & F1 & Acc & Prec & F1 & Acc & Prec & F1 \\
\midrule
0.00 & 88.88 & 89.70 & 89.76 & 86.23 & 83.28 & 86.80 & 80.16 & 74.93 & 82.05 \\
0.25 & \underline{90.30} & 91.21 & 90.19 & 87.17 & 85.71 & 87.42 & 81.33 & 77.07 & 82.69 \\
0.50 & 90.43 & 92.44 & \underline{90.20} & \underline{87.67} & 87.37 & \underline{87.72} & 82.13 & 78.76 & 83.12 \\
0.75 & \textbf{90.63} & \underline{93.69} & \textbf{90.29} & \textbf{87.90} & \underline{88.49} & \textbf{87.81} & \underline{83.07} & \underline{80.58} & \underline{83.73} \\
1.00 & 90.13 & \textbf{94.79} & 89.59 & \textbf{87.90} & \textbf{90.35} & 87.56 & \textbf{83.93} & \textbf{83.10} & \textbf{84.00} \\
\bottomrule
\end{tabular}
\end{table}

\textbf{Effect of $\lambda$}
Varying $\lambda$ shows that gating is not brittle: performance remains competitive for $\lambda\in[0.25,1.0]$.
In particular, Adversarial improves substantially over $\lambda=0$ and peaks around $\lambda=0.5$ (Acc: 80.16 $\rightarrow$ 84.59; F1: 82.05 $\rightarrow$ 84.46), suggesting that moderate uncertainty amplification best targets conflict cases.
Very large gating can slightly over-suppress on some splits (e.g., Popular F1 drops at $\lambda=0.75$), but the overall trend remains stable.

\begin{table}[htbp]
\centering
\caption{Sensitivity to gating temperature $\lambda$ on POPE (LLaVA v1.5-7B).}
\label{tab:lambda_sens}
\renewcommand{\arraystretch}{1.0}
\setlength{\tabcolsep}{13.4pt}

\begin{tabular}{cccccccccc}
\toprule
\multirow{2}{*}{$\lambda$}
& \multicolumn{3}{c}{Random} & \multicolumn{3}{c}{Popular} & \multicolumn{3}{c}{Adversarial}\\
\cmidrule(lr){2-4}\cmidrule(lr){5-7}\cmidrule(lr){8-10}
& Acc & Prec & F1 & Acc & Prec & F1 & Acc & Prec & F1 \\
\midrule
0.00 & 88.88 & 89.70 & \underline{89.76} & 86.23 & 83.28 & 86.80 & 80.16 & 74.93 & 82.05 \\
0.25 & \textbf{90.50} & 92.22 & \textbf{90.30} & 87.73 & 87.19 & \textbf{87.82} & 82.07 & 78.46 & 83.13 \\
0.50 & 89.80 & \textbf{95.02} & 89.17 & \textbf{88.10} & \textbf{91.50} & \underline{87.59} & \textbf{84.59} & \textbf{85.06} & \textbf{84.46} \\
0.75 & \underline{90.13} & \underline{94.86} & 89.58 & \underline{87.93} & \underline{90.41} & 84.87 & 83.83 & \underline{83.15} & \underline{84.00} \\
1.00 & \underline{90.13} & 94.79 & 89.59 & 87.90 & 90.35 & 87.56 & \underline{83.93} & 83.10 & \underline{84.00} \\
\bottomrule
\end{tabular}
\end{table}

\section{Computational Complexity of MGAP}
\label{app:complexity}

\subsection{Notation}
Let $d$ denote the hidden dimension and $K$ the prior subspace rank with $K\ll d$.
Let $N$ denote the number of blind hidden states used to construct the prior basis.
Let $V\in\mathbb{R}^{d\times K}$ be the prior basis with orthonormal columns,
\begin{equation}
V^\top V = I_K.
\end{equation}
Define the orthogonal projector
\begin{equation}
P := VV^\top \in \mathbb{R}^{d\times d}.
\end{equation}
At each decoding step, MGAP computes
\begin{equation}
h_{\mathrm{proj}} = Ph = VV^\top h,
\end{equation}
and performs the update
\begin{equation}
h_{\mathrm{cut}} = h - \beta h_{\mathrm{proj}},
\end{equation}
where
\begin{equation}
\beta = \alpha \cdot g \cdot p.
\end{equation}
Here $g=\tanh(\lambda H)$ and $p=1-\cos(h,h_{\mathrm{proj}})$ are scalar factors.

\subsection{Online (Per-token) Complexity}

\begin{proposition}[MGAP per-step time complexity]
\label{prop:mgap_online_time}
Given $V\in\mathbb{R}^{d\times K}$, the additional time complexity incurred by MGAP at a decoding step is
\begin{equation}
T_{\text{MGAP}}(d,K) = \mathcal{O}(dK).
\end{equation}
\end{proposition}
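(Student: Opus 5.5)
The plan is to count the arithmetic operations of each step in Algorithm~\ref{alg:mgap} that MGAP adds on top of vanilla decoding, and then sum them. The baseline forward pass that produces $h_{\mathrm{orig}}$ and the logits $\mathcal{F}(h_{\mathrm{orig}})$ is not counted, since vanilla decoding already performs it.

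The projection is evaluated right-to-left, never by forming $P=VV^\top$. First compute $c=V^\top h\in\mathbb{R}^K$, which is $K$ inner products of length $d$ and costs $\mathcal{O}(dK)$. Then compute $h_{\mathrm{proj}}=Vc$, a linear combination of $K$ columns of length $d$, again $\mathcal{O}(dK)$. This ordering is the one point that needs care: materializing $P$ would cost $\mathcal{O}(d^2K)$ once plus $\mathcal{O}(d^2)$ per step, so the claim relies on keeping $V$ in factored form.

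The scalar factors are cheap. The protection factor $p=1-\cos(h,h_{\mathrm{proj}})$ needs one inner product and two norms, which is $\mathcal{O}(d)$. Alternatively, by the identity in the proof of Theorem~\ref{thm:prov_bounded_step}, $p=1-\|h_{\mathrm{proj}}\|/\|h\|=1-\|c\|/\|h\|$, which costs only $\mathcal{O}(d+K)$. Given $H$, the gate $g=\tanh(\lambda H)$ is $\mathcal{O}(1)$, and $\beta$ is $\mathcal{O}(1)$. The update $h_{\mathrm{cut}}=h-\beta h_{\mathrm{proj}}$ is an axpy costing $\mathcal{O}(d)$.

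The main obstacle is the entropy $H$. It needs the distribution $p(\cdot)=\mathrm{Softmax}(\mathcal{F}(h_{\mathrm{orig}}))$ over the vocabulary, and the readout $\mathcal{F}(h_{\mathrm{cut}})$ must be recomputed after the update. I would handle both within the proposition's accounting:
\begin{itemize}
\item The logits $\mathcal{F}(h_{\mathrm{orig}})$ come from the forward pass vanilla decoding already performs, so they are not attributed to MGAP. Computing $H$ from them is a vocabulary-size term that does not depend on $d$ or $K$, and I treat it as part of the baseline decoding cost, to be stated explicitly.
\item For the readout, $\mathcal{F}$ is linear with unembedding $W$, so $W h_{\mathrm{cut}}=W h-\beta\,(WV)c$. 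The matrix $WV$ can be precomputed offline, and the readout is the step vanilla decoding already pays for.
\end{itemize}
I would state this convention explicitly, since the proposition speaks of the additional cost in $(d,K)$.

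Summing gives $\mathcal{O}(dK)+\mathcal{O}(d)+\mathcal{O}(1)=\mathcal{O}(dK)$, using $K\ge1$.
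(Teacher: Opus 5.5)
Your core count is the paper's proof. You form $c=V^\top h$ and then $Vc$ (two $\mathcal{O}(dK)$ matrix--vector products, never materializing $P$), add $\mathcal{O}(d)$ for the cosine and the residual update and $\mathcal{O}(1)$ for the scalars, and sum. The shortcut $p=1-\|c\|/\|h\|$ is a harmless refinement.

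The paper's proof never mentions the entropy or the readout of $h_{\mathrm{cut}}$. Your attempt to account for them goes beyond it, but your readout bullet has an accounting slip.

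\textbf{The correction term is not free.} In $Wh_{\mathrm{cut}}=Wh-\beta(WV)c$, only $Wh$ is work that vanilla decoding already performs. The product $(WV)c$ is new per-step work costing $\mathcal{O}(|\mathcal{Y}|K)$, with $|\mathcal{Y}|$ the vocabulary size. This term depends on $K$, so your convention (excluding vocabulary terms that do not depend on $d$ or $K$) does not cover it. For both backbones $|\mathcal{Y}|$ exceeds $d$ by a large factor, so this term actually dominates the $\mathcal{O}(dK)$ projection you are bounding.

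\textbf{Linearity is your assumption.} The paper only writes $\mathrm{Softmax}(\mathcal{F}(h_t))$ and never says $\mathcal{F}$ is linear. If $h$ is taken before the final normalization layer, recomputing $\mathcal{F}(h_{\mathrm{cut}})$ costs a full $\mathcal{O}(|\mathcal{Y}|d)$.

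\textbf{How to fix it.} Pick one of two consistent options. Either declare that all vocabulary-side work is excluded from the count: the entropy, the second softmax, and the re-readout. This is what the paper's proof does implicitly. Or state the bound honestly as $\mathcal{O}(dK+|\mathcal{Y}|K)$ under a linear readout.
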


\begin{proof}
Compute $u=V^\top h\in\mathbb{R}^{K}$:
\begin{equation}
u = V^\top h.
\end{equation}
This is a matrix--vector multiply of size $K\times d$, costing
\begin{equation}
\mathcal{O}(dK).
\end{equation}
Compute the projection $h_{\mathrm{proj}}=Vu\in\mathbb{R}^{d}$:
\begin{equation}
h_{\mathrm{proj}} = Vu.
\end{equation}
This is a matrix--vector multiply of size $d\times K$, costing
\begin{equation}
\mathcal{O}(dK).
\end{equation}
Compute cosine similarity requires an inner product and two norms:
\begin{equation}
\langle h, h_{\mathrm{proj}}\rangle,\;\;\|h\|_2,\;\;\|h_{\mathrm{proj}}\|_2,
\end{equation}
which costs
\begin{equation}
\mathcal{O}(d).
\end{equation}
Compute the gated scaling $\beta=\alpha g p$ is scalar-only:
\begin{equation}
\beta = \alpha g p,
\end{equation}
which costs
\begin{equation}
\mathcal{O}(1).
\end{equation}
Finally compute the residual update:
\begin{equation}
h_{\mathrm{cut}} = h - \beta h_{\mathrm{proj}},
\end{equation}
which costs
\begin{equation}
\mathcal{O}(d).
\end{equation}
Thus the dominant cost is two $d\times K$ / $K\times d$ matrix--vector multiplies, yielding
\begin{equation}
T_{\text{MGAP}}(d,K) = \mathcal{O}(dK).
\end{equation}
\end{proof}

\begin{proposition}[MGAP additional memory]
\label{prop:mgap_online_mem}
The additional memory required by MGAP (beyond the base model) is
\begin{equation}
M_{\text{MGAP}}(d,K) = \mathcal{O}(dK),
\end{equation}
plus $\mathcal{O}(d)$ temporary vectors per decoding step.
\end{proposition}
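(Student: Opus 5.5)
The plan is to account for every array MGAP keeps beyond the frozen base model, split into persistent storage and per-step temporaries, and bound each by counting scalars. The only persistent object introduced by MGAP is the prior basis $V\in\mathbb{R}^{d\times K}$ computed offline (Eq.~\eqref{eq:svd_prior}). It has $dK$ entries, so storing it costs $\mathcal{O}(dK)$ memory. The scalar hyper-parameters $\alpha$ and $\lambda$ add only $\mathcal{O}(1)$.

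Next I will follow the same sequence of operations as in the proof of Proposition~\ref{prop:mgap_online_time} and record the working memory each one needs.
\begin{itemize}
\item The coefficient vector $u=V^\top h$ takes $K$ entries.
\item The projection $h_{\mathrm{proj}}=Vu$ takes $d$ entries.
\item The inner product $\langle h,h_{\mathrm{proj}}\rangle$, the two norms, the cosine, $\delta$, $\gamma$ and $\beta$ are scalars, so $\mathcal{O}(1)$ in total.
\item The corrected state $h_{\mathrm{cut}}$ takes $d$ entries.
\end{itemize}
The entropy $H$ is computed from the softmax distribution over the vocabulary. Vanilla decoding already forms this distribution to produce logits, so I will treat it as belonging to the base model rather than to MGAP; the reduction to $H$ itself adds only a scalar. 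Since $K\le d$, the per-step temporaries total $\mathcal{O}(d)$. They can also be overwritten from step to step, so they do not grow with the sequence length.

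Adding the two parts gives $\mathcal{O}(dK)+\mathcal{O}(d)$, which is the claim. The one point that needs an explicit argument is that MGAP never forms the $d\times d$ projector $P=VV^\top$, which would cost $\mathcal{O}(d^2)$. I will state that the projection is always evaluated in the factored form $V(V^\top h)$, exactly as in the time-complexity proof. That is the only place where a careless implementation could break the bound. The offline SVD buffer for $\tilde H_{\mathrm{blind}}\in\mathbb{R}^{N\times d}$ is discarded after $V$ is extracted, so it is not part of the online footprint.
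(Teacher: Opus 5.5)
Your argument is correct and takes the same route as the paper's proof. Both charge the persistent basis $V\in\mathbb{R}^{d\times K}$ as $\mathcal{O}(dK)$ and the per-step temporaries ($u\in\mathbb{R}^K$ and $h_{\mathrm{proj}}\in\mathbb{R}^d$, plus $h_{\mathrm{cut}}$ and the scalars in your version) as $\mathcal{O}(d)$; your remarks about never forming $VV^\top$ and discarding the offline SVD matrix are sound details the paper leaves implicit. One small tightening: Algorithm~\ref{alg:mgap} applies the LM head twice per step (to $h_{\mathrm{orig}}$ to get $H$, then to $h_{\mathrm{cut}}$ for sampling), so attributing the vocabulary-sized buffer to the base model is justified because only the scalar $H$ is kept from the first pass and the second pass can reuse the same buffer.
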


\begin{proof}
MGAP stores the prior basis $V\in\mathbb{R}^{d\times K}$, which takes
\begin{equation}
\mathcal{O}(dK)
\end{equation}
floating-point numbers.
At runtime, MGAP may allocate temporary vectors $u\in\mathbb{R}^K$ and $h_{\mathrm{proj}}\in\mathbb{R}^d$, costing
\begin{equation}
\mathcal{O}(K) + \mathcal{O}(d) = \mathcal{O}(d).
\end{equation}
Therefore the asymptotic additional storage is dominated by $V$:
\begin{equation}
M_{\text{MGAP}}(d,K) = \mathcal{O}(dK).
\end{equation}
\end{proof}

\section{Offline Subspace Construction Complexity}

Let $H_{\mathrm{blind}}\in\mathbb{R}^{N\times d}$ be the centered blind-state matrix:
\begin{equation}
H_{\mathrm{blind}} = \begin{bmatrix}
(h^{(1)}_{\mathrm{blind}}-\bar{h})^\top \\
\vdots \\
(h^{(N)}_{\mathrm{blind}}-\bar{h})^\top
\end{bmatrix}.
\end{equation}
MGAP computes the top-$K$ right singular vectors:
\begin{equation}
H_{\mathrm{blind}} = U\Sigma V^\top,
\end{equation}
and retains
\begin{equation}
V_{\mathrm{prior}} := V_{[:,1:K]} \in \mathbb{R}^{d\times K}.
\end{equation}

\paragraph{Full SVD (exact).}
\begin{proposition}[Offline exact SVD complexity]
\label{prop:svd_full}
Computing an exact SVD of $H_{\mathrm{blind}}\in\mathbb{R}^{N\times d}$ has time complexity
\begin{equation}
T_{\text{SVD-full}}(N,d) = \mathcal{O}\!\left(\min\{Nd^2,\,N^2d\}\right),
\end{equation}
and memory complexity
\begin{equation}
M_{\text{SVD-full}}(N,d) = \mathcal{O}(Nd).
\end{equation}
\end{proposition}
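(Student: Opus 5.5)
The plan is to reduce the claim to the standard cost of dense matrix factorizations, handling the two regimes $N\le d$ and $N>d$ separately and then combining them. Throughout, $H_{\mathrm{blind}}\in\mathbb{R}^{N\times d}$ is a dense real matrix, and ``exact SVD'' means the usual Golub--Kahan pipeline: bidiagonalization followed by an iterative diagonalization that converges to machine precision. Its per-iteration cost on a bidiagonal matrix is linear in the dimension, so it is dominated by the bidiagonalization.

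\emph{Case $N\ge d$ (tall matrix).} I would first compute a Householder QR factorization $H_{\mathrm{blind}}=QR$ with $R\in\mathbb{R}^{d\times d}$. This takes $\mathcal{O}(Nd^2)$ flops: $d$ reflectors, each applied to an $N\times d$ block at cost $\mathcal{O}(Nd)$. Next I would compute the SVD of the small matrix $R=\tilde U\Sigma V^\top$ in $\mathcal{O}(d^3)\le\mathcal{O}(Nd^2)$, which shows that $V$ (the object we actually need) is recovered without ever forming $U$ explicitly. If the left factor is wanted, $U=Q\tilde U$ costs another $\mathcal{O}(Nd^2)$.

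\emph{Case $N<d$ (wide matrix).} Apply the same argument to $H_{\mathrm{blind}}^\top$, swapping the roles of $N$ and $d$, which gives $\mathcal{O}(N^2d)$. An alternative route, and a cleaner argument for our purposes, is to form the Gram matrix $H_{\mathrm{blind}}H_{\mathrm{blind}}^\top\in\mathbb{R}^{N\times N}$ in $\mathcal{O}(N^2d)$ and eigendecompose it in $\mathcal{O}(N^3)$. One then recovers $V_{[:,1:r]}=H_{\mathrm{blind}}^\top U\Sigma^{-1}$ for the nonzero singular values in $\mathcal{O}(N^2d)$.

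Combining the two cases, the cost is $\mathcal{O}(Nd\min\{N,d\})=\mathcal{O}(\min\{Nd^2,N^2d\})$. For memory, the input occupies $Nd$ entries. The QR and bidiagonalization can be done in place, with the Householder vectors stored in the zeroed entries. The auxiliary $\min\{N,d\}^2$ factor satisfies $\min\{N,d\}^2\le Nd$, and the thin $U$ and $V$ factors are at most $Nd$ each. The total is therefore $\mathcal{O}(Nd)$.

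The main obstacle is what ``exact'' means for the iterative diagonalization stage, since SVD cannot be computed in finitely many arithmetic operations in general. I would treat the bidiagonal QR/dqds iteration as converging in $\mathcal{O}(\min\{N,d\})$ sweeps, each of cost $\mathcal{O}(\min\{N,d\})$, to fixed machine precision. This is the standard convention in numerical linear algebra, and under it the diagonalization stage is subsumed by the bidiagonalization cost. I would state this convention explicitly rather than attempt an exact-arithmetic claim.
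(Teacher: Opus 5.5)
Your argument is correct, and it proves what the paper's proof only asserts. The paper cites ``the classical bound'' for dense SVD and, for memory, notes only that storing $H_{\mathrm{blind}}$ takes $Nd$ entries. You derive the time bound by reducing to an SVD of a $\min\{N,d\}\times\min\{N,d\}$ factor, via Householder QR of $H_{\mathrm{blind}}$ or $H_{\mathrm{blind}}^\top$, or via the $N\times N$ Gram matrix when $N<d$. The $\mathcal{O}(\min\{N,d\}^3)$ cost of that small SVD is then absorbed into $\mathcal{O}(Nd\min\{N,d\})=\mathcal{O}(\min\{Nd^2,N^2d\})$. Your memory argument is also the one an upper bound actually needs: the paper's remark covers only the input, whereas you also bound the workspace ($\min\{N,d\}^2\le Nd$) and the output factors. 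In doing so you implicitly restrict to the thin (economy) SVD, and you should say so explicitly. In the paper's own regime ($N=50\ll d$), a full orthogonal $V\in\mathbb{R}^{d\times d}$ alone occupies $d^2$ entries, which is not $\mathcal{O}(Nd)$. Forming it by accumulating Householder reflectors also costs $\mathcal{O}(Nd^2)$ rather than $\mathcal{O}(N^2d)$. So the proposition holds only for the thin factorization, which is all that $V_{[:,1:K]}$ requires. One small correction: the bidiagonal iteration costs $\mathcal{O}(\min\{N,d\})$ per sweep only when singular vectors are not accumulated. With vectors, each sweep costs $\mathcal{O}(\min\{N,d\}^2)$ on the small factor, for $\mathcal{O}(\min\{N,d\}^3)$ in total. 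That is still dominated by the reduction step and matches the $\mathcal{O}(d^3)$ you already quote for the SVD of $R$, so your conclusion stands.
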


\begin{proof}
Standard dense SVD for an $N\times d$ matrix has the classical bound
\begin{equation}
\mathcal{O}\!\left(\min\{Nd^2,\,N^2d\}\right).
\end{equation}
Storing the data matrix requires $Nd$ entries:
\begin{equation}
\mathcal{O}(Nd).
\end{equation}
\end{proof}

\paragraph{Truncated SVD (iterative, exact up to tolerance).}
In practice, MGAP only needs the top-$K$ right singular vectors.
This can be obtained via Krylov subspace methods (e.g., Lanczos) using repeated matrix--vector products.

\begin{proposition}[Offline truncated SVD complexity]
\label{prop:svd_trunc}
Assume $K\ll \min\{N,d\}$ and the top-$K$ singular vectors are computed by an iterative method requiring $T$ iterations.
Then the time complexity is
\begin{equation}
T_{\text{SVD-trunc}}(N,d,K) = \mathcal{O}\!\left(T\cdot Nd\right),
\end{equation}
and the additional memory beyond storing $H_{\mathrm{blind}}$ is
\begin{equation}
M_{\text{SVD-trunc}}(N,d,K) = \mathcal{O}\!\left((N+d)K\right).
\end{equation}
\end{proposition}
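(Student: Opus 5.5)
The plan is to charge the cost of the Krylov method, here block Lanczos (or block power/subspace iteration) on $H_{\mathrm{blind}}$ with a block of $\mathcal{O}(K)$ vectors, iteration by iteration, and never form the Gram matrix $H_{\mathrm{blind}}^\top H_{\mathrm{blind}}\in\mathbb{R}^{d\times d}$ explicitly. The key observation is that the top-$K$ right singular vectors of $H_{\mathrm{blind}}$ are the top-$K$ eigenvectors of $H_{\mathrm{blind}}^\top H_{\mathrm{blind}}$. An iterative solver touches this operator only through products $x\mapsto H_{\mathrm{blind}}^\top(H_{\mathrm{blind}} x)$, each computed as two matrix--vector multiplies.

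First, I will count the per-iteration arithmetic. Applying $H_{\mathrm{blind}}$ to a vector in $\mathbb{R}^d$ costs $\mathcal{O}(Nd)$, and applying $H_{\mathrm{blind}}^\top$ to the resulting vector in $\mathbb{R}^N$ also costs $\mathcal{O}(Nd)$, so one operator application costs $\mathcal{O}(Nd)$. I will treat $T$ as the total number of such operator applications, i.e., iterations times block size. With that convention the products contribute $\mathcal{O}(T\cdot Nd)$ in total. The auxiliary work per step is reorthogonalizing against the current $\mathcal{O}(K)$ basis vectors, at cost $\mathcal{O}(dK)$ or $\mathcal{O}(dK^2)$ per block, and the small Rayleigh--Ritz eigenproblem, at cost $\mathcal{O}(K^3)$. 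Both are dominated by $\mathcal{O}(Nd)$ once $K\ll\min\{N,d\}$, assuming for instance $K^2\lesssim N$ so that $dK^2\lesssim Nd$. Forming the final Ritz vectors $V_{\mathrm{prior}}$ costs $\mathcal{O}(dK^2)$, which is again absorbed. This gives the stated time bound.

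For memory, beyond the stored $H_{\mathrm{blind}}$ the method keeps the right basis block in $\mathbb{R}^{d\times K}$, the corresponding left block $H_{\mathrm{blind}}V\in\mathbb{R}^{N\times K}$ (or $U_K$), and a $K\times K$ projected matrix. These take $\mathcal{O}(dK+NK+K^2)=\mathcal{O}((N+d)K)$ storage.

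The main obstacle is how to interpret $T$. If $T$ counts block iterations with block size $K$, the honest bound is $\mathcal{O}(T\cdot NdK)$. I would handle this by stating the convention explicitly: $T$ counts operator applications, or equivalently the block size is treated as a constant absorbed into $T$. I would also cite standard Lanczos convergence (depending on the gap $\sigma_K/\sigma_{K+1}$ and the tolerance) only to justify that $T$ is finite, not to bound it, since the proposition leaves $T$ as a parameter.
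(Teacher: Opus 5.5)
Your proposal follows the paper's proof. Both charge $\mathcal{O}(Nd)$ per product with $H_{\mathrm{blind}}$ or $H_{\mathrm{blind}}^\top$, multiply by $T$ to get $\mathcal{O}(T\cdot Nd)$, and count $\mathcal{O}(K)$ working vectors in $\mathbb{R}^N$ and $\mathbb{R}^d$ to get $\mathcal{O}((N+d)K)$. You are more careful than the paper, which tacitly takes $T$ to count single operator applications and the basis to stay at $\mathcal{O}(K)$ vectors (a restarted or subspace-iteration scheme), and never accounts for orthogonalization or Rayleigh--Ritz overhead.

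One small correction: your side assumption $K^2\lesssim N$ is unnecessary. Charge each block's $\mathcal{O}((N+d)K^2+K^3)$ overhead against that block's $K$ operator applications, which cost $\mathcal{O}(NdK)$; this needs only $K\lesssim\min\{N,d\}$.
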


\begin{proof}
Each iteration of Krylov/Lanczos methods uses one or a constant number of multiplications by $H_{\mathrm{blind}}$ and $H_{\mathrm{blind}}^\top$.
A matrix--vector multiply costs
\begin{equation}
\mathcal{O}(Nd).
\end{equation}
With $T$ iterations, the total is
\begin{equation}
\mathcal{O}(TNd).
\end{equation}
The method stores $\mathcal{O}(K)$ basis vectors in $\mathbb{R}^{N}$ and/or $\mathbb{R}^{d}$, giving
\begin{equation}
\mathcal{O}(NK)+\mathcal{O}(dK)=\mathcal{O}((N+d)K).
\end{equation}
\end{proof}

\paragraph{Randomized SVD (approximate).}
A common alternative is randomized SVD, which forms a sketch of size $K+p$ with oversampling $p$.
Let $q$ denote the number of power iterations.

\begin{proposition}[Offline randomized SVD complexity]
\label{prop:svd_rand}
Randomized SVD for extracting $K$ components from $H_{\mathrm{blind}}\in\mathbb{R}^{N\times d}$ has time complexity
\begin{equation}
T_{\text{SVD-rand}}(N,d,K) = \mathcal{O}\!\left(Nd(K+p) + q\cdot Nd(K+p)\right),
\end{equation}
and memory complexity
\begin{equation}
M_{\text{SVD-rand}}(N,d,K) = \mathcal{O}\!\left((N+d)(K+p)\right).
\end{equation}
\end{proposition}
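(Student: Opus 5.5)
The plan is to follow the standard randomized range-finder of Halko--Martinsson--Tropp and account for the cost of each stage. I will write $\ell := K+p$ and assume $\ell \le \min\{N,d\}$, which holds since $K\ll\min\{N,d\}$ and $p$ is a small constant oversampling. The algorithm has four stages:
\begin{itemize}
  \item[(i)] draw a Gaussian test matrix $\Omega\in\mathbb{R}^{d\times \ell}$;
  \item[(ii)] form the sketch $Y=H_{\mathrm{blind}}\Omega\in\mathbb{R}^{N\times\ell}$, optionally refined by $q$ power iterations $Y\leftarrow H_{\mathrm{blind}}(H_{\mathrm{blind}}^\top Y)$ with re-orthonormalization;
  \item[(iii)] compute an orthonormal basis $Q$ of $Y$ by thin QR;
  \item[(iv)] form $B=Q^\top H_{\mathrm{blind}}\in\mathbb{R}^{\ell\times d}$, take the SVD of the small matrix $B$, and read off the top-$K$ right singular vectors as $V_{\mathrm{prior}}$.
\end{itemize}

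For the time bound, I will cost each stage in turn. Generating $\Omega$ costs $\mathcal{O}(d\ell)$. The product $H_{\mathrm{blind}}\Omega$ is a dense $N\times d$ by $d\times\ell$ multiply, costing $\mathcal{O}(Nd\ell)$. Each power iteration performs two such multiplies, one with $H_{\mathrm{blind}}^\top$ and one with $H_{\mathrm{blind}}$, plus QR steps of cost $\mathcal{O}((N+d)\ell^2)$. Over $q$ iterations this gives $\mathcal{O}(qNd\ell)$. The thin QR of $Y$ costs $\mathcal{O}(N\ell^2)$, and forming $B$ costs $\mathcal{O}(Nd\ell)$. The SVD of $B$ costs $\mathcal{O}(d\ell^2)$, and recovering the right singular vectors needs no further multiplication by $Q$.

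All the lower-order terms $(N+d)\ell^2$ are absorbed into $Nd\ell$ because $\ell\le\min\{N,d\}$: we have $N\ell^2\le Nd\ell$ and $d\ell^2\le Nd\ell$. Summing the stages therefore yields $\mathcal{O}(Nd(K+p)+qNd(K+p))$.

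For memory, beyond the stored data matrix (already counted in the exact-SVD bound), the working arrays are $\Omega$ and the $d\times\ell$ intermediate $H_{\mathrm{blind}}^\top Y$, each $\mathcal{O}(d\ell)$. The arrays $Y$ and $Q$ are each $\mathcal{O}(N\ell)$. The matrix $B$ and the factors of its SVD are $\mathcal{O}(d\ell+\ell^2)$. Together these give $\mathcal{O}((N+d)(K+p))$. This mirrors the accounting used in Proposition~\ref{prop:svd_trunc}, and I will state explicitly that the bound counts extra storage beyond $H_{\mathrm{blind}}$.

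The main obstacle is definitional rather than technical. The statement's memory bound must be read as excluding the $\mathcal{O}(Nd)$ input, and the absorption of the QR and small-SVD costs relies on the assumption $\ell\le\min\{N,d\}$. I will make both points explicit at the start of the proof.
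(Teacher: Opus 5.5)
Your proposal is correct and follows the same argument as the paper. Both bound the time by the cost of multiplying $H_{\mathrm{blind}}$ (or its transpose) against a $(K+p)$-column block, which is $\mathcal{O}(Nd(K+p))$ per pass, repeated over $\mathcal{O}(1+q)$ passes, and both bound the memory by the sketch bases in $\mathbb{R}^N$ and $\mathbb{R}^d$. Your version also fills in details the paper's proof leaves implicit: it absorbs the QR, forming $B=Q^\top H_{\mathrm{blind}}$, and the small SVD using $K+p\le\min\{N,d\}$, and it states explicitly that the memory bound excludes the $\mathcal{O}(Nd)$ input matrix.
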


\begin{proof}
Randomized SVD forms a sketch using multiplications by $H_{\mathrm{blind}}$ (and optionally power iterations).
Each pass costs $\mathcal{O}(Nd(K+p))$ for multiplying by a $(d\times (K+p))$ test matrix.
With $q$ power iterations, the number of passes scales by $(1+q)$, yielding
\begin{equation}
\mathcal{O}\!\left((1+q)\cdot Nd(K+p)\right).
\end{equation}
The memory stores the sketch bases in $\mathbb{R}^{N}$ and $\mathbb{R}^{d}$:
\begin{equation}
\mathcal{O}((N+d)(K+p)).
\end{equation}
\end{proof}

\paragraph{Comparison to Multi-branch Contrastive Decoding}
MGAP performs a single-pass representation update and adds only $\mathcal{O}(dK)$ arithmetic per token.
In contrast, contrastive decoding baselines that require $B$ forward passes per token incur a multiplicative overhead in the base model cost.
Let $C_{\text{model}}$ denote the per-token compute cost of the backbone model.
Then their total per-token cost scales as
\begin{equation}
T_{\text{contrastive}} = \Theta(B\cdot C_{\text{model}}),
\end{equation}
whereas MGAP scales as
\begin{equation}
T_{\text{MGAP-total}} = \Theta(C_{\text{model}}) + \mathcal{O}(dK).
\end{equation}
When $K\ll d$ and the backbone dominates compute, $\mathcal{O}(dK)$ is negligible relative to $C_{\text{model}}$.

\end{document}